\documentclass[10pt,twocolumn,letterpaper]{article}

\usepackage{cvpr}
\usepackage{times}
\usepackage{epsfig}
\usepackage{graphicx}
\usepackage{amsmath}
\usepackage{amsthm}
\theoremstyle{plain}
\newtheorem{theorem}{Theorem}
\newtheorem{assumption}{Assumption}

\newtheorem{proposition}{Proposition}
\usepackage{amssymb}
\usepackage{times}
\usepackage{float}
\usepackage{multirow}
\usepackage{subfigure}
\usepackage{mathrsfs}
\usepackage{cite}
\usepackage{mathrsfs}
\usepackage{makecell}
\usepackage{float}
\usepackage[english]{babel}
\usepackage[ruled]{algorithm2e}
\usepackage{bm}
\usepackage{nicefrac}

\usepackage[ruled]{algorithm2e}
\usepackage{array,booktabs,longtable,tabularx,tabulary}
\usepackage{ltablex}
\usepackage{siunitx}
\usepackage[flushleft]{threeparttablex}
\usepackage{stfloats}
\usepackage{lipsum}


\usepackage[pagebackref=true,breaklinks=true,letterpaper=true,colorlinks,bookmarks=false]{hyperref}

 \cvprfinalcopy 


\ifcvprfinal\pagestyle{empty}\fi
\begin{document}

\title{DistillHash: Unsupervised Deep Hashing by Distilling Data Pairs}

\author{
Erkun Yang$^{1,2}$,
Tongliang Liu$^2$,
Cheng Deng$^1$\thanks{Corresponding author}\ ,
Wei Liu$^{3*}$,
Dacheng Tao$^2$
\\
$^1$ School of Electronic Engineering, Xidian University, Xi¡¯an 710071, China \\
$^2$ UBTECH Sydney AI Centre, School of Computer Science, FEIT, University of Sydney, \\Darlington, NSW 2008, Australia,
$^3$ Tencent AI Lab, Shenzhen, China\\
{\tt\small ekyang@stu.xidian.edu.cn,
tongliang.liu@sydney.edu.au,
chdeng.xd@gmail.com,}\\
 {\tt\small wl2223@columbia.edu,
dacheng.tao@sydney.edu.au}
}

\maketitle

\begin{abstract}
Due to the high storage and search efficiency, hashing has become prevalent for large-scale similarity search. Particularly, deep hashing methods have greatly improved the search performance under supervised scenarios.
In contrast, unsupervised deep hashing models can
hardly achieve satisfactory performance due to the lack of reliable supervisory similarity signals.
To address this issue, we propose a novel deep unsupervised hashing model, dubbed DistillHash, which can learn a distilled data set consisted of data pairs, which have confidence similarity signals. Specifically, we investigate the relationship between the initial noisy similarity signals learned from local structures and the semantic similarity labels assigned by a Bayes optimal classifier. We show that under a mild assumption, some data pairs, of which labels are consistent with those assigned by the Bayes optimal classifier, can be potentially distilled. Inspired by this fact, we design a simple yet effective strategy to distill data pairs automatically and further adopt a Bayesian learning framework to learn hash functions from the distilled data set. Extensive experimental results on three widely used benchmark datasets show that the proposed DistillHash consistently accomplishes the state-of-the-art search performance.

\end{abstract}
\section{Introduction}
The explosive growth of visual data (e.g., photos and videos) has led to renewed interest
in efficient indexing and searching
algorithms~\cite{wang2018distributed,ng2018incremental,shen2015supervised,li2019coupled,liu2016multimedia,wang2016learning,liu2012compact,zhang2016discrete,li2018sub,liu2016multimedia,liu2018deep,chi2018hashing,wang2017exploring,shen2018zero,chen2018attention,du2018expressive,yang2018adversarial,Yang2018NewL,you2017privileged}, among which, hashing-based
approximate nearest neighbor (ANN) searching, which balances retrieval quality and computational cost,
has attracted increasing attention.


Generally, hashing methods can be divided into supervised and unsupervised models. The supervised hashing models~\cite{liu2012supervised,li2015feature,Chen_2018_CVPR,yang2018shared}, which aim to learn
hash functions with semantic labels, have shown remarkable performance. However, existing supervised hashing methods, especially
deep hashing rely on massive labeled data examples to train their models. Thus, when there exist no enough training examples,
their performance may be dramatically degraded caused by over-fitting to those training examples.

To address this challenge, unsupervised hashing methods usually adopt learning frameworks without requiring any supervised information. Traditional
unsupervised hashing methods~\cite{andoni2006near,gong2013iterative,liu2014discrete,heo2012spherical} with hand-crafted features cannot well preserve the similarities of real-world data samples due to the low model capacity and the
separated representation and binary codes optimization processes. To take advantages of the recent progress of deep learning~\cite{krizhevsky2012imagenet,wang2019evolutionary,you2017learning},
unsupervised deep hashing methods~\cite{krizhevsky2011using,lin2016learning,dai2017stochastic,dizaji2018unsupervised,deng2019unsupervised},
which adopt  neural networks as hash functions, have also been proposed. These deep hashing models are usually
trained by minimizing either
the quantization loss or data reconstruction loss. However, since these objectives fail to exploit the semantic similarities between data points, they
can hardly achieve satisfactory results.

In this paper, we propose a novel unsupervised deep hashing model, dubbed DistillHash, which addresses the absence of supervisory signals by distilling data pairs with confident semantic similarity relationships.
In particular, we first exploit the local structure of training data points to assign an initial similarity label for each data pair.
If we treat the semantic similarity labels as true labels,
these initial similarity labels then contain label- and instance-dependent label noise, because many of
them fail to represent semantic similarities. By assuming that we know the probability of a semantic
similarity label given a pair of the data points, the Bayes optimal classifier will assign the semantic
similarity label to the data pair which has a higher probability (or has a probability greater than 0.5).
Based on these results, we  give strict analysis on the relationship between  the noisy labels and the labels assigned by the Bayes optimal classifier. Inspired by the framework of~\cite{cheng2017learning}, we show that,
under a mild assumption, data pairs with confident semantic labels can be potentially distilled. Furthermore,
we theoretically give the criteria to select distilled data pairs and also provide a simple but effective method to collect distilled data pairs automatically.
Finally, given the distilled data pair set, we design a deep neural network  and adopt a Bayesian learning framework to perform the representation and hash code learning simultaneously.
%

Our main contributions can be summarized as follows:
\begin{itemize}
\item By considering signals learned from deep features as noisy pairwise labels, we successfully apply noisy label learning techniques to our method. This shows that data pairs, of which labels are consistent with those assigned by
the Bayes optimal classifier, can be potentially distilled.
\item We theoretically give the criteria to select distilled data pairs for hash learning and further provide a
simple but effective method to collect distilled data pairs automatically.
\item Experiments on three popular benchmark datasets show that our method can
outperform current state-of-the-art unsupervised hashing methods.
\end{itemize}

The rest of this paper is organized as follows. We review the relevant literature in Section~\ref{relate_work}.
We present our novel DistillHash in Section~\ref{approach}.
Section~\ref{experiments} details the experiments, after which concluding
remarks are presented in Section~\ref{conclusion}.

\section{Related Work}
\label{relate_work}

Recently, the amount of literatures have grown up considerably around the theme of hashing~\cite{liu2016query,deng2018triplet,li2018self,deng2015adaptive,liu2016query,liu2016structure}. According to whether supervised information are involved in the learning phase,
existing  hashing models can be divided into two categories: supervised hashing methods and unsupervised hashing methods.

Supervised hashing methods~\cite{liu2012supervised,gui2018fast,xia2014supervised,li2015feature,cao2018deep,norouzi2011minimal,song2014robust,deng2016discriminative,yang2017pairwise} aim to learn hash functions that can map data points
to Hamming space where the semantic similarity can be preserved.
 Kernel-based supervised hashing (KSH)~\cite{liu2012supervised} uses inner products to approximate the Hamming distance and learns hash functions by perserving semantic similarities in Hamming space.
 Fast supervised discrete hashing (FSDH)~\cite{gui2018fast} uses a simple yet effective regression from the class labels of training data points to the corresponding hash code to accelerate
 the learning process.
 Convolutional neural networks-based hashing (CNNH)~\cite{xia2014supervised} decomposes the hash function learning into two stages. Firstly, a pairwise similarity matrix is
 constructed and decomposed into the product of approximate hash codes. Secondly, CNNH
 simultaneously learns  representations and hash functions by training the model to predict the learned hash
 codes as well as the discrete image class labels.
Deep Cauchy hashing (DCH)~\cite{cao2018deep} adopts Cauchy distribution to continue to optimize data pairs in a relatively small Hamming ball.
Unsupervised hashing methods~\cite{andoni2006near,liu2014discrete,gong2013iterative,heo2012spherical,liu2011hashing} try to  encode original data into
binary codes by training with unlabeled data points.
Iterative quantization (ITQ)~\cite{gong2013iterative} first uses principal component analysis (PCA) to  map the data to a low dimensional space and then exploits an alternating minimization scheme to find a rotation matrix, which maps the data to binary codes with minimum quantization error.
Discrete graph
hashing (DGH)~\cite{liu2014discrete} casts the graph hashing problem into a
discrete optimization framework and explicitly deals with
the discrete constraints, so it can directly output binary codes.
Spherical hashing (SpH)~\cite{heo2012spherical} minimizes
the spherical distance between the original real-valued
features and the learned binary codes.
Anchor graph hashing (AGH)~\cite{liu2011hashing} utilizes anchor graphs
to obtain tractable low-rank adjacency matrices and approximate the data structure.
Though current traditional unsupervised hashing methods have made great progress, they usually depend on pre-defined features and cannot
simultaneously optimize the feature and hash code learning
processes, thus missing an opportunity to learn more effective hash codes.

 Unsupervised deep  hashing methods~\cite{salakhutdinov2009semantic,liong2015deep,krizhevsky2011using,dizaji2018unsupervised,lin2016learning,dai2017stochastic,yang2018semantic}
   adopt deep architectures to extract features and perform hash mapping.
 Semantic hashing~\cite{salakhutdinov2009semantic} uses  pre-trained restricted Boltzmann machines (RBMs)~\cite{nair2010rectified} to construct an auto-encoder network, which is
 then used to generate efficient hash codes and reconstruct the original inputs.
 Deep binary descriptors (DeepBit)~\cite{lin2016learning} considers original images and the corresponding rotated images as similar pairs and tries to learn hash codes to
 preserve this similarity. Stochastic generative hashing (SGH)~\cite{dai2017stochastic} utilizes a generative mechanism to learn hash functions
through the minimum description length
principle. The hash codes are optimized to maximally compress the dataset as well as to regenerate the inputs.
 Semantic structure-based unsupervised deep hashing (SSDH)~\cite{yang2018semantic}  takes advantage of the semantic information
in deep features and learns semantic structures based on the pairwise distances and a Gaussian estimation.
The semantic structure is then used to guide the hash code learning process.
By integrating the feature and hash code learning processes, deep unsupervised hashing methods usually produce better results.

Training classifiers from noisy labels is also a closely related task.
We refer the noisy labels to the setting where the labels of data points are corrupted~\cite{biggio2011support,yu2017transfer,han2018masking,han2018co}. Since in many situations,
it is both expensive and difficult to obtain reliable labels,  a growing body of literature has been devoted to learning with noisy labels.
Those methods can be organized into two major groups: label noise-tolerant classification~\cite{ali1996error,melville2004experiments} and
label noise cleansing methods~\cite{NIPS2013_5073,cheng2017learning,liu2016classification,DBLP:conf/uai/2017}.
The former adopts the strategies like decision trees or boosting-based ensemble techniques, while the latter
tries to filter the label noise by exploiting the prior information from training samples. For a comprehensive understanding, we recommend readers to read~\cite{frenay2014classification}.
By treating the initial similarity relationship as noisy labels, our method can explicitly model the relationship between
noisy labels and labels assigned by the Bayes optimal classifiers, which then enables us to extract data pairs with confident similarity signals.

%
\section{Approach}
\label{approach}

Let $\mathcal{ X }= {\{\bm{x}_{i}\}}^{N}_{i=1}$ denote the training set with $N$ instances,
deep  hashing aims to learn nonlinear hash functions ${h}: \bm{x}\mapsto \bm{b}\in {\{-1, 1\}}^{K} $, which can encode original data
points $\bm{x}$  to
compact $K$-bit hash codes.

Traditional supervised deep hashing methods usually accept data pairs $\{(\bm{x}_{i}, \bm{x}_{j}), S_{ij}\}$ as inputs,
where
$S_{ij}\in\{+1, -1\}$ is a binary label to indicate whether $\bm{x}_{i}$ and $\bm{x}_{j}$ are similar or not. However, due to the laborious labeling process
 and the need of requisite domain knowledge, it's not feasible to directly acquire  labels in many tasks. Thus, in this paper, we study the  hashing problem under unsupervised settings.

 Inspired by the Bayesian classifier theory~\cite{domingos1997optimality}, reliable labels for data pairs can be confidently assigned by  an Bayes optimal classifier, i.e.,
 \begin{equation}
 \label{equ: 1}
 \begin{aligned}
  {S}_{ij} = \begin{cases} 1, \quad&\text{if } \eta(\bm{x}_{i}, \bm{x}_{j})\ge0.5 ,\\ -1,   &\text{if }\eta(\bm{x}_{i}, \bm{x}_{j})<0.5, \end{cases}
 \end{aligned}
\end{equation}
where  $\eta(\bm{x}_{i}, \bm{x}_{j}) = P({S}_{ij} = +1 | \bm{x}_{i}, \bm{x}_{j} )$. This Bayes optimal classifier
implies that if we
have  access to $\eta(\bm{x}_{i}, \bm{x}_{j})$, we can infer the true data labels with Eq.~\ref{equ: 1}.
However, under unsupervised settings, we cannot access $\eta(\bm{x}_{i}, \bm{x}_{j})$.

For unsupervised learning, some recent works~\cite{pinheiro2018unsupervised,li2018instance,yang2018semantic} demonstrate
that local structures learned from original features can help to capture the similarity relationship between points.
Motivated by this, we can roughly label the training
data pairs based on their local structures and construct
a similarity matrix $\bm{\tilde{S}}$ as
\begin{equation}
\label{equ: 2}
\begin{aligned}
{{\tilde{S}}}_{ij} = \begin{cases} 1, \quad&\text{if } d(i,j) \le t_1 ,\\ -1,  \quad&\text{if } d(i,j)>t_2, \end{cases}
\end{aligned}
\end{equation}
where $d(i,j)$ denotes the distance of extracted features for $\bm{x}_{i}$ and $\bm{x}_{j}$, $t_1$ and $t_2$ are the thresholds
for the distance. However, since $\bm{\tilde{S}}$ is only constructed from local structures, they are unreliable and may contain  label noise.

Note that, based on $\bm{\tilde{S}}$ we can
learn an estimation of the conditional probability $\tilde{\eta}(\bm{x}_{i}, \bm{x}_{j}) = P(\tilde{S}_{ij} = +1 | \bm{x}_{i}, \bm{x}_{j} )$.
And, there exists a relationship between  $\tilde{\eta}(\bm{x}_{i}, \bm{x}_{j})$ and $\eta(\bm{x}_{i}, \bm{x}_{j})$ as
\begin{equation}
\label{equ: 3}
\begin{aligned}
&\tilde{\eta}(\bm{x}_{i}, \bm{x}_{j}) = P(\tilde{S}_{ij} = +1 | \bm{x}_{i}, \bm{x}_{j} )\\
&= P(\tilde{S}_{ij} = +1|\bm{x}_{i}, \bm{x}_{j}, S_{ij} = +1)P(S_{ij} = +1|\bm{x}_{i}, \bm{x}_{j})\\
&\quad + P(\tilde{S}_{ij} = +1|\bm{x}_{i}, \bm{x}_{j}, S_{ij} = -1)P(S_{ij} = -1|\bm{x}_{i}, \bm{x}_{j})\\
&= (1 - {\rho}_{+1}(\bm{x}_{i}, \bm{x}_{j}))\eta(\bm{x}_{i}, \bm{x}_{j})\\
 &\quad + {\rho}_{-1}(\bm{x}_{i}, \bm{x}_{j})(1-\eta(\bm{x}_{i}, \bm{x}_{j})),
\end{aligned}
\end{equation}
where ${\rho}_{{S}_{ij}}(\bm{x}_{i}, \bm{x}_{j}) = P(\tilde{S}_{ij} = -{S}_{ij}|\bm{x}_{i}, \bm{x}_{j}, {S}_{ij})$ denotes the
flip rate between true labels and noisy labels on given data pair $(\bm{x}_{i}, \bm{x}_{j})$ and their label ${S}_{ij}$.
If we know the values of  ${\rho}_{{S}_{ij}}(\bm{x}_{i}, \bm{x}_{j})$ and $\tilde{\eta}(\bm{x}_{i}, \bm{x}_{j})$,
the values of  $\eta(\bm{x}_{i}, \bm{x}_{j}) $ can be easily inferred. However, the values of ${\rho}_{{S}_{ij}}(\bm{x}_{i}, \bm{x}_{j})$ are unknown.
From Eq.~\ref{equ: 3} we can further get that, when the flip rates
${\rho}_{+1}(\bm{x}_{i}, \bm{x}_{j})$ and ${\rho}_{-1}(\bm{x}_{i}, \bm{x}_{j})$ are relatively small,
if $\tilde{\eta}(\bm{x}_{i}, \bm{x}_{j})$ is large, $\eta(\bm{x}_{i}, \bm{x}_{j}) $ should also be large,
and vice versa. In the following subsection, we show that it is possible to infer whether $\eta(\bm{x}_{i}, \bm{x}_{j})$ is smaller or larger than $0.5$ based on some weak
information\footnote{As many of the labels in $\tilde{S}$ are correct, we show later that upper bounds for ${\rho}_{{S}_{ij}}(\bm{x}_{i}, \bm{x}_{j})$ can be easily obtained.} of ${\rho}_{{S}_{ij}}(\bm{x}_{i}, \bm{x}_{j})$,
which means we
may potentially achieve the reliable labels for some data pairs.
We define those data pairs of which reliable labels can be recovered from $\tilde{S}$ as distilled data pairs.

 In the following subsection, we theoretically prove that distilled data pairs can be extracted under a mild assumption.
And  we further provide a method to collect distilled data pairs automatically.

\subsection{Collecting Distilled Data Pairs Automatically}
To collect distilled data pairs, we first give the following assumption.
\begin{assumption} For any data pairs $\{(\bm{x}_{i}, \bm{x}_{j}), i, j = 1,...N\}$, we have
\label{asum: 1}
\begin{equation}
 \label{equ: 4}
\begin{aligned}
 0\le{\rho}_{+1}(\bm{x}_{i}, \bm{x}_{j}) + {\rho}_{-1}(\bm{x}_{i}, \bm{x}_{j}) \le 1.
 \end{aligned}
\end{equation}
\end{assumption}
This assumption implies that label noise is not too heavy. Note that, if the number of correctly
labeled data pairs is considered larger than that of
mislabeled data pairs, the flip rate ${\rho}_{{S}_{ij}}(\bm{x}_{i}, \bm{x}_{j})$ will be bounded by $0.5$. We can
see that Assumption~\ref{asum: 1} is much weaker than the above assumption.

It is hard to prove that
the noisy labels constructed by exploiting local structures satisfy Assumption~\ref{asum: 1}. However, the experimental results on three widely used benchmark datasets empirically verify that
the assumption applies well to the constructed noisy labels. In the rest of this paper, we always suppose  Assumption \ref{asum: 1} holds.

 We then extend the noisy label learning techniques in~\cite{cheng2017learning} to pairwise labels and present the following key theorem, which gives the
basic criteria to collect distilled data pairs.
%

\begin{theorem}
\label{the: 1}
For any data pairs $\{(\bm{x}_{i}, \bm{x}_{j}), i, j = 1,...N\}$, we have

if $\tilde{\eta}(\bm{x}_{i}, \bm{x}_{j}) < \frac{1-{\rho}_{+1}(\bm{x}_{i}, \bm{x}_{j})}{2}$, then $\{(\bm{x}_{i}, \bm{x}_{j}),s_{ij} = -1\}$ is a distilled data pair;

if $\tilde{\eta}(\bm{x}_{i}, \bm{x}_{j}) > \frac{1+{\rho}_{-1}(\bm{x}_{i}, \bm{x}_{j})}{2}$, then $\{(\bm{x}_{i}, \bm{x}_{j}),s_{ij} = +1\}$ is a distilled data pair.
\end{theorem}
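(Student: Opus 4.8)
The plan is to unpack the definition: by the paper's terminology, ``$\{(\bm{x}_i,\bm{x}_j),s_{ij}\}$ is a distilled data pair'' means that the asserted label $s_{ij}$ coincides with the label the Bayes optimal classifier of Eq.~\ref{equ: 1} would assign, i.e.\ $\eta(\bm{x}_i,\bm{x}_j)<0.5$ when $s_{ij}=-1$ and $\eta(\bm{x}_i,\bm{x}_j)\ge 0.5$ when $s_{ij}=+1$. The entire statement therefore reduces to showing that each threshold condition on $\tilde\eta$ forces $\eta$ onto the correct side of $0.5$. To link the two, I would first collect the $\eta$ terms in the noisy-to-clean relation Eq.~\ref{equ: 3} (suppressing the arguments $(\bm{x}_i,\bm{x}_j)$) into the single affine identity
\begin{equation}
\label{equ: affine}
\tilde\eta = (1-\rho_{+1}-\rho_{-1})\,\eta + \rho_{-1},
\end{equation}
which makes the dependence of $\tilde\eta$ on $\eta$ fully transparent and serves as the workhorse of the argument.

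Next I would invoke Assumption~\ref{asum: 1}: since $0\le \rho_{+1}+\rho_{-1}\le 1$, the slope $1-\rho_{+1}-\rho_{-1}$ in Eq.~\ref{equ: affine} is nonnegative, so $\tilde\eta$ is a nondecreasing function of $\eta$. The decisive quantity is the value of $\tilde\eta$ at the Bayes decision boundary $\eta=0.5$, namely $\tilde\eta_0 := \tfrac12(1-\rho_{+1}+\rho_{-1})$, obtained by substituting $\eta=0.5$ into Eq.~\ref{equ: affine}. Because $\rho_{+1},\rho_{-1}\ge 0$, a one-line estimate sandwiches this boundary value between the two thresholds of the theorem,
\begin{equation}
\label{equ: sandwich}
\frac{1-\rho_{+1}}{2}\ \le\ \tilde\eta_0\ \le\ \frac{1+\rho_{-1}}{2},
\end{equation}
where the left gap equals $\rho_{-1}/2$ and the right gap equals $\rho_{+1}/2$.

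Finally I would combine monotonicity with Eq.~\ref{equ: sandwich}. In the first case, $\tilde\eta<\tfrac12(1-\rho_{+1})\le\tilde\eta_0$ together with the monotonicity of Eq.~\ref{equ: affine} yields $\eta<0.5$, so the Bayes label is $-1$ and the pair with $s_{ij}=-1$ is distilled; in the second case, $\tilde\eta>\tfrac12(1+\rho_{-1})\ge\tilde\eta_0$ yields $\eta>0.5$, so the Bayes label is $+1$ and the pair with $s_{ij}=+1$ is distilled. I expect the only delicate point to be the degenerate regime of maximal admissible noise, $\rho_{+1}+\rho_{-1}=1$, where the slope in Eq.~\ref{equ: affine} vanishes and the relation collapses to the constant $\tilde\eta=\rho_{-1}$; there the monotonicity step is vacuous, so I would check separately that each threshold inequality is then unsatisfiable (it would force $\rho_{-1}<0$ or $\rho_{-1}>1$), leaving the implications true by default. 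Beyond this edge case and the harmless choice of how to break the tie at $\eta=0.5$, the remaining work is routine algebra.
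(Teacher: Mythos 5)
Your proposal is correct and follows essentially the same route as the paper's proof: both rewrite Eq.~\ref{equ: 3} as the affine relation $\tilde\eta=(1-\rho_{+1}-\rho_{-1})\eta+\rho_{-1}$, use Assumption~\ref{asum: 1} to make the slope nonnegative, bound $\tilde\eta$ at $\eta=0.5$ by $\frac{1-\rho_{+1}+\rho_{-1}}{2}$, and conclude by contraposition against Eq.~\ref{equ: 1}. Your explicit monotonicity/sandwich framing and the check of the degenerate case $\rho_{+1}+\rho_{-1}=1$ are just a slightly more careful packaging of the same argument.
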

\begin{proof}
 According to Eq.~\ref{equ: 3}, for any data pair $\{(\bm{x}_{i}, \bm{x}_{j})|{\eta}(\bm{x}_{i}, \bm{x}_{j})\ge0.5,i,j = 1,...,N\}$, we have
\begin{equation}
\label{equ: 5}
\begin{aligned}
&\tilde{\eta}(\bm{x}_{i}, \bm{x}_{j}) = (1 - {\rho}_{+1}(\bm{x}_{i}, \bm{x}_{j}))\eta(\bm{x}_{i}, \bm{x}_{j})\\
 &\quad + {\rho}_{-1}(\bm{x}_{i}, \bm{x}_{j})(1-\eta(\bm{x}_{i}, \bm{x}_{j}))\\
& =  \eta(\bm{x}_{i}, \bm{x}_{j})(1 - {\rho}_{+1}(\bm{x}_{i}, \bm{x}_{j})- {\rho}_{-1}(\bm{x}_{i}, \bm{x}_{j}))\\
&\quad + {\rho}_{-1}(\bm{x}_{i}, \bm{x}_{j})\\
&\ge\frac{1 - {\rho}_{+1}(\bm{x}_{i}, \bm{x}_{j}) + {\rho}_{-1}(\bm{x}_{i}, \bm{x}_{j})}{2}\\
&\ge\frac{1 - {\rho}_{+1}(\bm{x}_{i}, \bm{x}_{j})}{2}.
\end{aligned}
\end{equation}
The first inequality holds since ${\eta}(\bm{x}_{i}, \bm{x}_{j})\ge 0.5$ and ${\rho}_{+1}(\bm{x}_{i}, \bm{x}_{j}) + {\rho}_{-1}(\bm{x}_{i}, \bm{x}_{j})\le 1$.
Based on Eq.~\ref{equ: 5}, we have
$${\eta}(\bm{x}_{i}, \bm{x}_{j})\ge 0.5 \Rightarrow \tilde{\eta}(\bm{x}_{i}, \bm{x}_{j}) \ge\frac{1 - {\rho}_{+1}(\bm{x}_{i}, \bm{x}_{j})}{2},$$
which implies that
 $$ \tilde{\eta}(\bm{x}_{i}, \bm{x}_{j})< \frac{1 - {\rho}_{+1}(\bm{x}_{i}, \bm{x}_{j})}{2} \Rightarrow {\eta}(\bm{x}_{i}, \bm{x}_{j})<0.5.$$
Combining this result with Eq.~\ref{equ: 1}, we can label  data pair $(\bm{x}_{i}, \bm{x}_{j})$ with $S_{ij} = -1$, if $\tilde{\eta}(\bm{x}_{i}, \bm{x}_{j})< \frac{1 - {\rho}_{+1}(\bm{x}_{i}, \bm{x}_{j})}{2}$. Similarly, we can prove that data pairs $(\bm{x}_{i}, \bm{x}_{j})$
with $\tilde{\eta}(\bm{x}_{i}, \bm{x}_{j}) > \frac{1+{\rho}_{-1}(\bm{x}_{i}, \bm{x}_{j})}{2}$ can be labeled with $S_{ij} = +1$.
\end{proof}
The trade-off for selecting distilled data pairs
is the need of estimating the conditional probability $\tilde{\eta}$ and the flip rate ${\rho}_{{S}_{ij}}(\bm{x}_{i}, \bm{x}_{j})$.
To estimate  $\tilde{\eta}$, we adopt a probabilistic classification method. Specifically, we design
a deep network to map data pairs to probabilities. Since this objective is similar to the hash code learning, we explore the same architecture
for the estimation of $\tilde{\eta}$ and hash code learning. The detailed description of this deep network will be presented in the next subsection.

For the estimation of the flip rate ${\rho}_{{S}_{ij}}(\bm{x}_{i}, \bm{x}_{j})$, most existing works~\cite{liu2016classification,DBLP:conf/uai/2017} assume the noise to be label- and instance-independent or instance-independent. While
in our method, the flip rate should be label- and instance-dependent, so most existing methods
are not suitable for the current problem. Considering the difficulty to directly estimate the flip rate, we alternatively propose a
method to obtain an upper bound. Formally, we give the following proposition.

\begin{proposition}
 \label{pro: 1}
 Given the conditional probability $\tilde{\eta}(\bm{x}_{i}, \bm{x}_{j})$,
the following inequations holds
 \begin{equation}
  \label{equ: 6}
 \begin{aligned}
 &{\rho}_{-1}(\bm{x}_{i}, \bm{x}_{j})\le \tilde{\eta}(\bm{x}_{i}, \bm{x}_{j}),\\
  &{\rho}_{+1}(\bm{x}_{i}, \bm{x}_{j})\le 1-\tilde{\eta}(\bm{x}_{i}, \bm{x}_{j}).
  \end{aligned}
 \end{equation}
\end{proposition}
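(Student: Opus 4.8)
The plan is to derive both inequalities directly from the noise model in Eq.~\ref{equ: 3} together with Assumption~\ref{asum: 1}, with essentially no additional machinery. The whole argument rests on rewriting Eq.~\ref{equ: 3} so that the quantity $1-\rho_{+1}(\bm{x}_{i}, \bm{x}_{j})-\rho_{-1}(\bm{x}_{i}, \bm{x}_{j})$ — which Assumption~\ref{asum: 1} forces to be nonnegative — appears as a single factor multiplying a probability that lies in $[0,1]$.

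First I would collect the $\eta(\bm{x}_{i}, \bm{x}_{j})$ terms in Eq.~\ref{equ: 3}, exactly as is already done in the first two lines of Eq.~\ref{equ: 5}, to obtain
\begin{equation*}
\tilde{\eta}(\bm{x}_{i}, \bm{x}_{j}) = \rho_{-1}(\bm{x}_{i}, \bm{x}_{j}) + \eta(\bm{x}_{i}, \bm{x}_{j})\bigl(1-\rho_{+1}(\bm{x}_{i}, \bm{x}_{j})-\rho_{-1}(\bm{x}_{i}, \bm{x}_{j})\bigr).
\end{equation*}
By Assumption~\ref{asum: 1} the bracketed factor is nonnegative, and since $\eta(\bm{x}_{i}, \bm{x}_{j})\ge 0$ the entire second summand is nonnegative. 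Dropping it yields $\tilde{\eta}(\bm{x}_{i}, \bm{x}_{j}) \ge \rho_{-1}(\bm{x}_{i}, \bm{x}_{j})$, which is the first claimed inequality.

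For the second inequality I would apply the mirror rearrangement to $1-\tilde{\eta}(\bm{x}_{i}, \bm{x}_{j})$, which is simply the probability $P(\tilde{S}_{ij}=-1\mid \bm{x}_{i}, \bm{x}_{j})$. Subtracting the displayed identity from $1$ and regrouping gives
\begin{equation*}
1-\tilde{\eta}(\bm{x}_{i}, \bm{x}_{j}) = \rho_{+1}(\bm{x}_{i}, \bm{x}_{j}) + \bigl(1-\eta(\bm{x}_{i}, \bm{x}_{j})\bigr)\bigl(1-\rho_{+1}(\bm{x}_{i}, \bm{x}_{j})-\rho_{-1}(\bm{x}_{i}, \bm{x}_{j})\bigr).
\end{equation*}
Again the product is nonnegative, because $1-\eta(\bm{x}_{i}, \bm{x}_{j})\ge 0$ and the bracketed factor is nonnegative by Assumption~\ref{asum: 1}, so $1-\tilde{\eta}(\bm{x}_{i}, \bm{x}_{j})\ge \rho_{+1}(\bm{x}_{i}, \bm{x}_{j})$, i.e.\ $\rho_{+1}(\bm{x}_{i}, \bm{x}_{j})\le 1-\tilde{\eta}(\bm{x}_{i}, \bm{x}_{j})$.

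There is no genuine analytic obstacle here; the statement is an algebraic identity followed by a sign check. The only thing to get right is choosing the grouping that exposes $1-\rho_{+1}-\rho_{-1}$ as a factor, since this is the single point where Assumption~\ref{asum: 1} enters — without it the residual terms could take either sign and neither bound would follow. As a sanity check I would note that both bounds are tight: the first holds with equality precisely when $\eta(\bm{x}_{i}, \bm{x}_{j})=0$ (or when the flip rates sum to $1$), and the second when $\eta(\bm{x}_{i}, \bm{x}_{j})=1$, which confirms that these are the sharpest upper bounds on the flip rates obtainable from $\tilde{\eta}$ alone.
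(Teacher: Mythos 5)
Your proof is correct and follows essentially the same route as the paper: both rearrange Eq.~\ref{equ: 3} to isolate the factor $1-\rho_{+1}-\rho_{-1}$, invoke Assumption~\ref{asum: 1} to drop the nonnegative term, and obtain $\tilde{\eta}\ge\rho_{-1}$; the paper handles the second inequality with a one-word ``similarly'' where you write out the mirror identity for $1-\tilde{\eta}$ explicitly. No gaps.
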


\begin{proof}
  According to Eq.~\ref{equ: 3}, we can get
  \begin{equation}
   \label{equ: 7}
  \begin{aligned}
  &\tilde{\eta}(\bm{x}_{i}, \bm{x}_{j}) = (1 - {\rho}_{+1}(\bm{x}_{i}, \bm{x}_{j}))\eta(\bm{x}_{i}, \bm{x}_{j})\\
 &\quad + {\rho}_{-1}(\bm{x}_{i}, \bm{x}_{j})(1-\eta(\bm{x}_{i}, \bm{x}_{j}))\\
& =  \eta(\bm{x}_{i}, \bm{x}_{j})(1 - {\rho}_{+1}(\bm{x}_{i}, \bm{x}_{j})- {\rho}_{-1}(\bm{x}_{i}, \bm{x}_{j}))\\
&\quad + {\rho}_{-1}(\bm{x}_{i}, \bm{x}_{j}) \ge {\rho}_{-1}(\bm{x}_{i}, \bm{x}_{j}).
\end{aligned}
\end{equation}
The inequality holds because  ${\rho}_{+1}(\bm{x}_{i}, \bm{x}_{j}) + {\rho}_{-1}(\bm{x}_{i}, \bm{x}_{j})\le 1$. Similarly, it gives
${\rho}_{+1}(\bm{x}_{i}, \bm{x}_{j})\le 1-\tilde{\eta}(\bm{x}_{i}, \bm{x}_{j})$.
\end{proof}

However, if we directly combine Proposition~\ref{pro: 1} and Theorem~\ref{the: 1}, no distilled data pairs can be selected. So, here we further
assume  the flip rate to be local invariant, and thus obtain the flip rate upper bounds as
\begin{equation}
\label{equ: 8}
 \begin{aligned}
   &{\rho}_{-1max}(\bm{x}_{i}, \bm{x}_{j}) \\
  &= min\{ \tilde{\eta}(\bm{x}_{k}, \bm{x}_{l}) | ,\bm{x}_{k}\in nn_{o}(\bm{x}_{i}), \bm{x}_{l}\in nn_{o}(\bm{x}_{j})\},\\
  &{\rho}_{+1max}(\bm{x}_{i}, \bm{x}_{j}) \\
  &= min\{ (1- \tilde{\eta}(\bm{x}_{k}, \bm{x}_{l})) | ,\bm{x}_{k}\in nn_{o}(\bm{x}_{i}), \bm{x}_{l}\in nn_{o}(\bm{x}_{j})\},
  \end{aligned}
 \end{equation}
where $nn_{o}(\bm{x}_{i})$ indicates the set of top $o$ nearest neighbors for $\bm{x}_{i}$.

With the flip rate upper bounds ${\rho}_{+1max}(\bm{x}_{i}, \bm{x}_{j})$ and ${\rho}_{-1max}(\bm{x}_{i}, \bm{x}_{j})$, we have
\begin{equation}
\begin{aligned}
\label{equ: 81}
 \frac{1-{\rho}_{+max}(\bm{x}_{i}, \bm{x}_{j})}{2}\le \frac{1-{\rho}_{+1}(\bm{x}_{i}, \bm{x}_{j})}{2}\\
 \frac{1+{\rho}_{-max}(\bm{x}_{i}, \bm{x}_{j})}{2} \ge \frac{1+{\rho}_{-1}(\bm{x}_{i}, \bm{x}_{j})}{2}.
 \end{aligned}
\end{equation}
The conditional probability $\tilde{\eta}(\bm{x}_{i}, \bm{x}_{j})$ can be estimated by the adopted deep networks, and
the noisy rate upper bound can be acquired with Eq.~\ref{equ: 8}. Combining these results with Eq.~\ref{equ: 81} and Theorem~\ref{the: 1},
we can find that the distilled data pairs can be successfully collected by
picking out every pairs $(\bm{x}_{i}, \bm{x}_{j})$ that satisfy
$\tilde{\eta}(\bm{x}_{i}, \bm{x}_{j}) > \frac{1+{\rho}_{-1max}(\bm{x}_{i}, \bm{x}_{j})}{2}$
and assigning label $S_{ij} = +1$,
and picking out every pairs $(\bm{x}_{i}, \bm{x}_{j})$ that satisfy
$\tilde{\eta}(\bm{x}_{i}, \bm{x}_{j}) < \frac{1-{\rho}_{+1max}(\bm{x}_{i}, \bm{x}_{j})}{2}$ and
assigning label $S_{ij} = -1$. The distilled data pair set can be represented as $\{(\bm{x}_{i}, \bm{x}_{j}, {S}_{ij}), i,j = 1,...m\}$, where
$m$ is the number of distilled data pairs.

After obtaining the distilled data pair set, we can perform hash code learning, which is similar to the learning process for supervised hashing. Specifically,
we adopt a Bayesian learning framework, which is elaborated in the following subsection.
\subsection{Bayesian Learning Framework}
In this subsection, we propose a Bayesian learning framework
to perform deep hashing learning and also estimate the conditional probability $\tilde{\eta}(\bm{x}_{i}, \bm{x}_{j})$.
We first introduce the framework for hash code learning
and then show how to apply it for the estimation of  $\tilde{\eta}(\bm{x}_{i}, \bm{x}_{j})$. 

By representing the hash codes for distilled data as $\bm{B} = [\bm{b}_{1},...,\bm{b}_
{m}]$, the
Maximum Likelihood (ML) estimation
of the hash codes  can be defined as:
\begin{equation}
 \label{equ: 9}
\log{P(S|\bm{B})} = \frac{1}{m^2}\sum_{i =1}^{m} \sum_{j =1}^{m}\log P({S}_{ij}|\bm{b}_{i},\bm{b}_{j}),
\end{equation}
where  $P({S}_{ij}|\bm{b}_{i},\bm{b}_{j})$ is the conditional probability of similarity label ${S}_{ij}$
given the hash
codes $\bm{b}_{i}$ and $\bm{b}_{j}$, which can be naturally approximated by a pairwise logistic function
\begin{equation}
 \label{equ: 10}
\log P({S}_{ij}|\bm{b}_{i},\bm{b}_{j})=\left\{
\begin{aligned}
&\sigma(\left<\bm{b}_{i}, \bm{b}_{j}\right>)   && {S}_{ij}=1,\\
&1-\sigma(\left<\bm{b}_{i}, \bm{b}_{j}\right>)     && {S}_{ij} = -1,
\end{aligned}
\right.
\end{equation}
where $\sigma(x) = \frac{1}{1+{e}^{-x}}$ is the sigmoid function and $\left<\bm{b}_{i}, \bm{b}_{j}\right>$ denotes the inner product of the
hash codes $\bm{b}_{i}$ and $\bm{b}_{j}$. Here, we adopt the inner product, since as indicated in~\cite{liu2012supervised},
the Hamming distance $dist_{H}(\cdot, \cdot)$ of hash codes can be inferred from the inner product $\left<\cdot, \cdot\right>$  as:
$dist_{H}(\bm{b}_{i}, \bm{b}_{j}) = \frac{1}{2}(K - \left<\bm{b}_{i}, \bm{b}_{j}\right>)$. Hence, the inner product can well reflect the Hamming
distance for binary hash codes.

Similar to logistic regression, we can find that the smaller the Hamming distance $dist_{H}(\bm{b}_{i}, \bm{b}_{j})$
is, the larger the inner product results $\left<\bm{b}_{i}, \bm{b}_{j}\right>$ and the conditional probability
$P(1|\bm{b}_{i},\bm{b}_{j})$ will be. Otherwise, the larger the conditional probability $P(-1|\bm{b}_{i},\bm{b}_{j})$ will be. These results imply
that similar data points will be enforced to have small Hamming distance and dissimilar data points will be enforced to have large
Hamming distance, which are expected for Hamming space similarity search. So,  learning  with Eq.~\ref{equ: 9}, effective hash codes
can be obtained.

After training the model, given a data point, we can obtain its hash codes by directly forward propagating it
through the adopted network, and obtain the final binary codes by the following sign function
\begin{equation}
\begin{aligned}
 \text{sign}(x) = \begin{cases} 1\quad &\text{if } x\ge 0,\\ -1 \quad &\text{if } x < 0.\end{cases}
 \label{eqn: eq1}
 \end{aligned}
\end{equation}
The whole learning algorithm is summarized in Algorithm~\ref{alg: alg1}.

Since this framework maps data pairs to similarity probabilities, we can also use it to estimate the conditional probability. The main difference
lies in that, for hash code learning we use distilled data pairs as inputs,
while for conditional probability estimation, we use the  data
pairs constructed from local structures as inputs.

\begin{algorithm}[!t]
 \SetAlgoNoLine
 \caption{\small DistillHash}
 \textbf{Training Stage}\\
 \BlankLine
 \textbf{Input:}{ Training images $\mathbf{X}$, code length $K$, mini-batch size $t$, hyper-parameters $o$ and $p$.}\\
 \textbf{Procedure:}\\
  1. Construct initial noisy similarity labels with Eq.~\eqref{equ: 2}.\\
  2. Estimate the conditional noisy label probability $\tilde{\eta}(\cdot,\cdot)$ for all training data pairs.\\
  3. Estimate the flip rate upper bounds for all training data pairs with Eq.~\eqref{equ: 8}.\\
  4. Distill data pairs with Theorem~\ref{the: 1}.\\
  \Repeat{convergence}{
    3.1 Randomly sample $t$ data pairs from the distilled data pair set as inputs.\\
    3.2 Calculate the outputs by forward-propagation through the adopted networks.\\
    3.3 Update parameters of the  network by minimizing Eq.~\eqref{equ: 9}.\\}
 \BlankLine
  \textbf{Testing Stage}\\
 \BlankLine
  \textbf{Input:}{ Image query $\mathbf{q}_i$, parameters for the adopted network.}\\
  \textbf{Procedure:}\\
  1. Calculate the output of the neural network by directly forward-propagating the input images.\\
  2. Obtain hash codes with the sign function.
  \label{alg: alg1}
\end{algorithm}

\section{Experiments}
\label{experiments}
We evaluate our method on three popular benchmark datasets, \textbf{FLICKR25K}, \textbf{NUSWIDE}, and \textbf{CIFAR10}, and provide extensive evaluations
to demonstrate its performance. In this section, we first introduce the datasets and then present our experimental results.

\subsection{Datasets}

\textbf{FLICKR25K}~\cite{huiskes08} contains 25,000 images collected from the Flickr website.
Each image is manually annotated with at least one of the 24 unique labels provided.
We randomly select 2,000 images as a test set; the remaining images are used as a retrieval set, from which we randomly select 5,000 images as a training set.
\textbf{NUSWIDE}~\cite{chua2009nus} contains 269,648 images, each of the images is annotated with multiple labels referring to 81 concepts.
The subset containing  the 10 most popular concepts is used here.
We randomly select 5,000 images as a test set; the remaining images are used as a retrieval set, and  10,500 images are randomly selected from the retrieval set as the training set.
\textbf{CIFAR10}~\cite{krizhevsky2009learning} is a popular image dataset containing 60,000 images in 10 classes. For each class, we randomly select
1,000 images as queries and 500  as training images, resulting in a query set containing 10,000 images and a training set made up of 5,000 images.
All  images except for the query set are used as the retrieval set.

\subsection{Baseline Methods}
The proposed method is compared with six state-of-the-art traditional unsupervised hashing methods
(LSH~\cite{andoni2006near}, SH~\cite{weiss2009spectral}, ITQ~\cite{gong2013iterative}, PCAH~\cite{wang2006annosearch}, DSH~\cite{jin2014density},
and SpH~\cite{heo2012spherical}) and three recently proposed deep unsupervised hashing methods (DeeBit~\cite{lin2016learning}, SGH~\cite{dai2017stochastic}, and SSDH~\cite{yang2018semantic}).
%
All the codes for these methods have been kindly provided by
the authors. LSH, SH, ITQ, PCAH, DSH, and SpH are implemented with MATLAB, SGH and SSDH are implemented
with TensorFlow~\cite{abadi2016tensorflow}, and DeepBit is implemented with Caffe~\cite{jia2014caffe}. We use TensorFlow when write
our code, and run the algorithm in a machine with one Titan
X Pascal GPU.

\subsection{Evaluation.}
To evaluate the performance of our proposed method, we adopt three evaluation criteria: mean of average precision (MAP),
topN-precision,  and precision-recall. The first two criteria are based on Hamming ranking, which ranks data points based on their Hamming distances to the query;
for its part, precision-recall is based on hash lookup. More detailed introductions are given as follows.

\textbf{MAP} is one of the most widely-used criteria for evaluating retrieval accuracy. Given a query and a list
of $R$ ranked retrieval results, the average precision (AP) for this query can be computed. MAP is then defined as the average of APs for all queries.
 For all three datasets, we set $R$ as the number of the retrieval set.
\textbf{TopN-precision} is defined as the
average ratio of similar instances among the top $N$ retrieved instances for
all queries in terms of Hamming distance. In our experiments, $N$ is set to 1,000.
\textbf{Precision-recall} reveals the precisions
 at different recall levels and is a good indicator of
 overall performance.
  Typically,
the area under the precision-recall curve is computed. A larger Precision-recall value always indicates better performance.
\subsection{Implementation Details}
To initialize the noisy similarity matrix in Eq.~\eqref{equ: 2}, we select the cosine distance as the distance to measure the local structure of training examples. The threshold
$t_{1}$ and $t_{2}$ are selected as indicated in~\cite{yang2018semantic}.
For the adopted deep networks, we use the VGG16 architecture~\cite{simonyan2014very} and replace
the last fully-connected layer with a new fully-connected layer with $K$ units for hash code learning.
For the estimation of the conditional probability $\tilde{\eta}$, we set the dimensions of the last fully-connected layer as $p$, which is $48$ in our experiments. To obtain the upper bound of the flip rate, we
 set $o$ as $4$. The parameter sensitivity of our algorithm with regard to $o$ and $p$ are
  analyzed in Subsection~\ref{parameter}.
Parameters for the new fully connected layer are learned from scratch,
while parameters for the preceding layers are fine-tuned from the model pre-trained on ImageNet~\cite{deng2009imagenet}.
 We employ the standard stochastic gradient
descent algorithm with 0.9 momentum for optimization, min-batch size is set to 64, and the learning rate is fixed to $10^{-3}$.
Two data points are considered neighbors if they share the same label (for CIFAR10) or share at least one common label (for the multi-label datasets FLICKR25K and NUSWIDE).

For a fair comparison,  we
adopt the deep features extracted from the last fully-connected layer from the VGG16 network pre-trained on ImageNet for all shallow architecture-based baseline methods. These deep features are also used for the construction of $\tilde{S}$. Since VGG16 accepts
images of size $224\times224$ as inputs, we resize all images  to be $224\times224$ before inputting them into the VGG16 network. Random rotation and flipping are also used
for data augmentation.


\begin{table*}[!t]
\small
\newcommand{\tabincell}[2]{\begin{tabular}{@{}#1@{}}#2\end{tabular}}
\centering
\caption{Comparison with baselines in terms of MAP. The best accuracy is shown in boldface.}
\begin{tabular}{ccccccccccccc}
      \toprule
 \multirow{2}{*}{method} &\multicolumn{4}{c}{FLICKR25K}&\multicolumn{4}{c}{NUSWIDE}&\multicolumn{4}{c}{CIFAR10}\\
\cmidrule(l){2-5}\cmidrule(l){6-9}\cmidrule(l){10-13}
& 16 bits & 32 bits & 64 bits & 128 bits & 16 bits & 32 bits & 64 bits & 128 bits & 16 bits & 32 bits & 64 bits & 128 bits\\
 \midrule
 LSH~\cite{andoni2006near}     & 0.5831 & 0.5885 & 0.5933 & 0.6014 & 0.4324 & 0.4411 & 0.4433 & 0.4816 & 0.1319 & 0.1580 & 0.1673 & 0.1794 \\
 SH~\cite{weiss2009spectral}      & 0.5919 & 0.5923 & 0.6016 & 0.6213 & 0.4458 & 0.4537 & 0.4926 & 0.5000 & 0.1605 & 0.1583 & 0.1509 & 0.1538 \\
 ITQ~\cite{gong2013iterative}     & 0.6192 & 0.6318 & 0.6346 & 0.6477 & 0.5283 & 0.5323 & 0.5319 & 0.5424 & 0.1942 & 0.2086 & 0.2151 & 0.2188 \\
 PCAH~\cite{wang2006annosearch}    & 0.6091 & 0.6105 & 0.6033 & 0.6071 & 0.4625 & 0.4531 & 0.4635 & 0.4923 & 0.1432 & 0.1589 & 0.1730 & 0.1835 \\
 DSH~\cite{jin2014density}     & 0.6074 & 0.6121 & 0.6118 & 0.6154 & 0.5200 & 0.5227 & 0.5345 & 0.5370 & 0.1616 & 0.1876 & 0.1918 & 0.2055 \\
 SpH~\cite{heo2012spherical}     & 0.6108 & 0.6029 & 0.6339 & 0.6251 & 0.4532 & 0.4597 & 0.4958 & 0.5127 & 0.1439 & 0.1665 & 0.1783 & 0.1840 \\

 DeepBit~\cite{lin2016learning} & 0.5934 & 0.5933 & 0.6199 & 0.6349 & 0.4542 & 0.4625 & 0.47616 & 0.4923 & 0.2204 & 0.2410 & 0.2521 & 0.2530 \\
 SGH~\cite{dai2017stochastic}     & 0.6162 & 0.6283 & 0.6253 & 0.6206 & 0.4936 & 0.4829 & 0.4865 & 0.4975 & 0.1795 & 0.1827 & 0.1889 & 0.1904 \\
 SSDH~\cite{yang2018semantic}    & 0.6621 & 0.6733 & 0.6732 & 0.6771 & 0.6231 & 0.6294 & 0.6321 & 0.6485 & 0.2568 & 0.2560 & 0.2587 & 0.2601 \\
 DistillHash &\textbf{0.6964} &\textbf{0.7056} & \textbf{0.7075} & \textbf{0.6995}
 &\textbf{0.6667} &\textbf{0.6752} & \textbf{0.6769} & \textbf{0.6747} &\textbf{0.2844} & \textbf{0.2853} & \textbf{0.2867}& \textbf{0.2895}\\
 \hline

\end{tabular}
\label{tab: tab3}
\vspace{-16pt}
\end{table*}
\subsection{Results and Discussion}
We first present the MAP values for all methods with different hash bit lengths, then
draw precision-recall and TopN-precision curves for all methods with  32 and 64 hash code lengths to give a more comprehensive comparison.

Table~\ref{tab: tab3} presents the MAP results for DistillHash and all baseline methods on FLICKR25K, NUSWIDE, and CIFAR10, with hash code numbers varying from 16 to 128.
By comparing the data-independent method LSH
with other data-dependent methods, we can see that data-dependent hashing methods outperform the data-independent hashing method in most cases. This may be because that data-dependent methods
learn hash functions from data, so can better capture the used data structures. By comparing deep hashing methods and no-deep hashing methods, we
find that no-deep hashing methods can surpass deep hashing methods DeepBit and SGH in some cases. This may be because that, without proper supervisory signals, deep hashing methods
cannot fully exploit the representation ability of deep networks, and may achieve unsatisfactory performance by over-fitting to bad local minima.
While, by exploiting  local structures, deep hashing methods (SSDH and DistillHash)  achieve more promising results.

Concretely, from the MAP results, we can see that DistillHash consistently obtains the best
results across different hash bit lengths for all three datasets.
Specifically, compared to one of the best non-deep hashing
methods, i.e, ITQ, we
achieve absolute improvements of $6.89\%$, $13.97\%$, and $7.73\%$ in the average
MAP for different bits on FLICKR25K, NUSWIDE, and
CIFAR10 respectively. Compared to the state-of-the-art
deep hashing method SSDH, we achieve absolute improvements of
$3.08\%$, $4.01\%$, and  $2.86\%$ in average MAP for different bits on the
three datasets respectively.
Note that DeepBit, SGH, SSDH, and DistillHash are both deep hashing methods,
only SSDH and DistillHash can exploit and preserve the similarity of different data points, thus they can achieve better performance than the other two.
Moreover, DistillHash learns more accurate similarity relationships by distilling some data pairs, so can obtain a further performance improvement than SSDH.
\begin{figure*}[!t]
\subfigure[TopN-precision with 16bits]{\label{fig: precision_mir_16}\includegraphics[trim={0 0 0 0},width=0.245\textwidth]{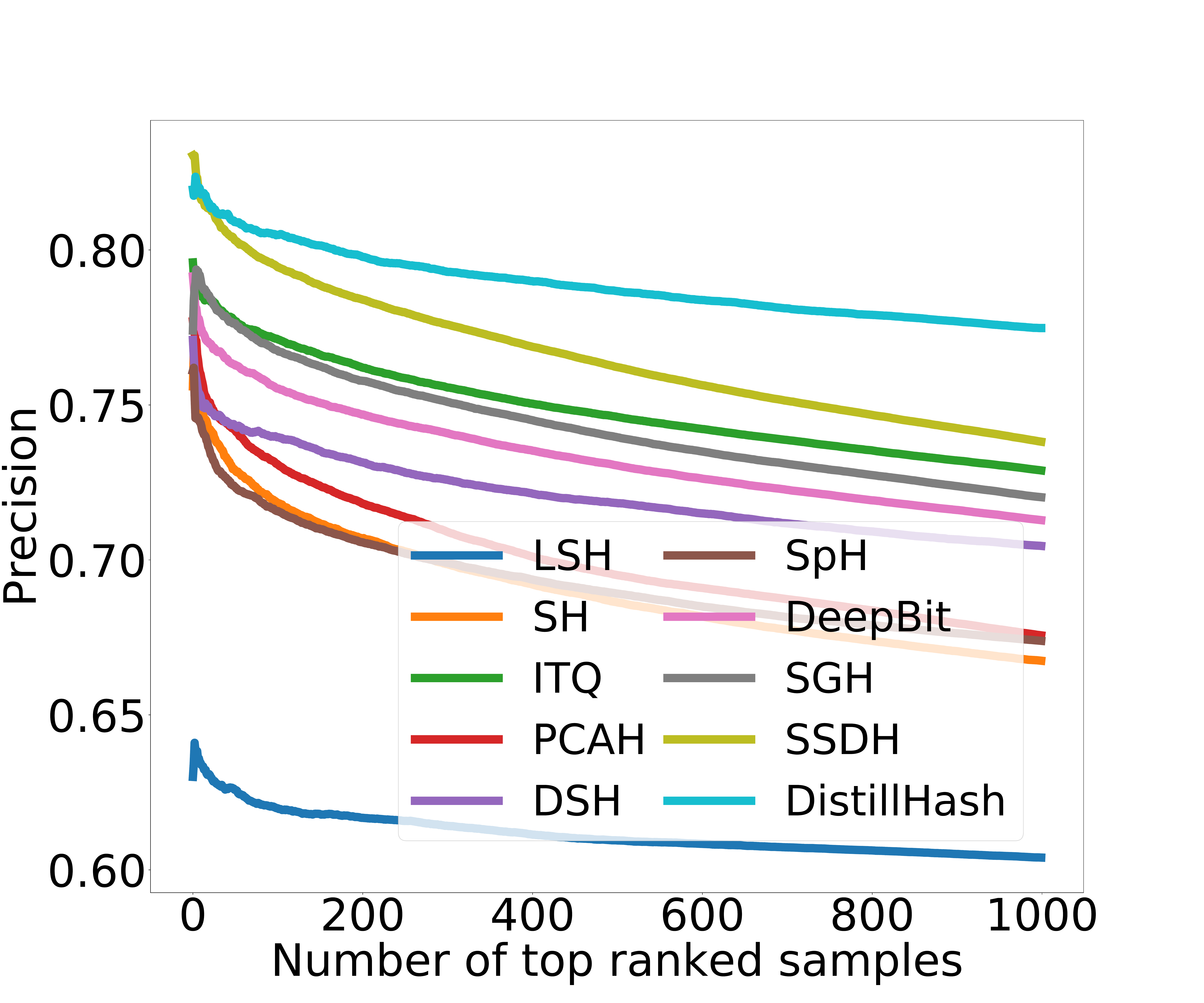}}
\subfigure[TopN-precision with 32bits]{\label{fig: precision_mir_32}\includegraphics[trim={0 0 0 0},width=0.245\textwidth]{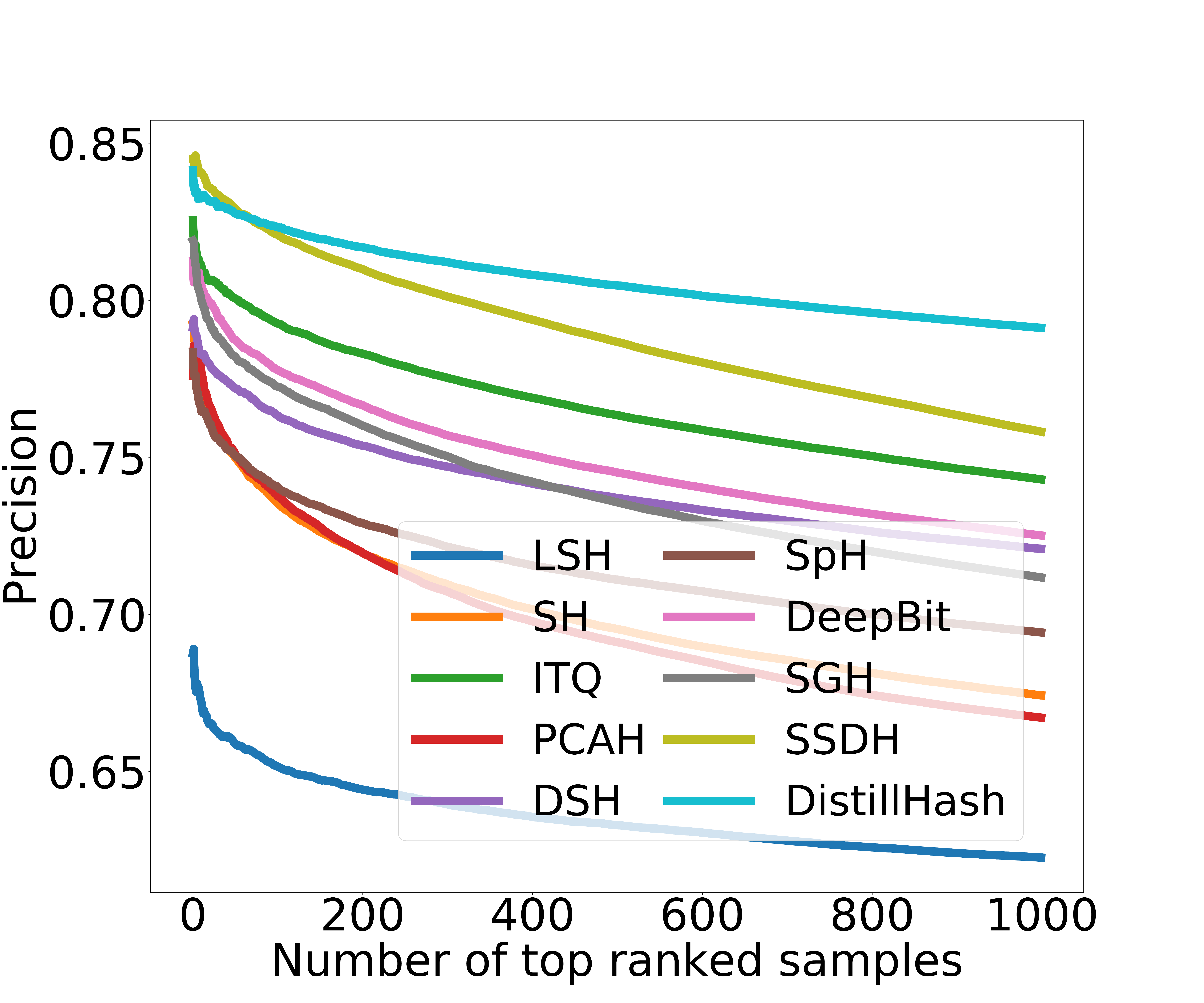}}
\subfigure[Precision-recall with 16bits]{\label{fig: PR_mir_16}\includegraphics[trim={0 0 0 0},width=0.245\textwidth]{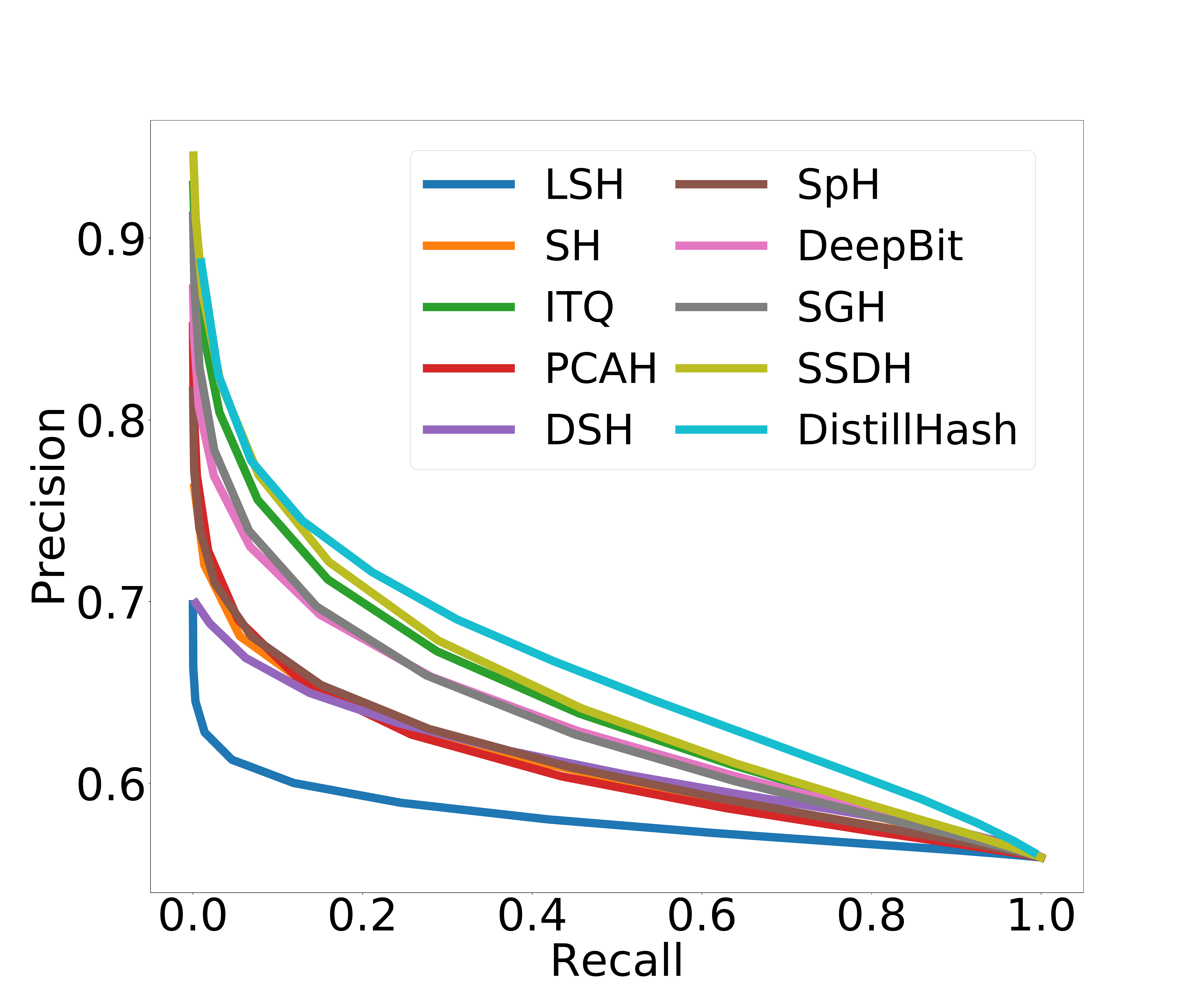}}
\subfigure[Precision-recall with 32bits]{\label{fig: PR_mir_32}\includegraphics[trim={0 0 0 0},width=0.245\textwidth]{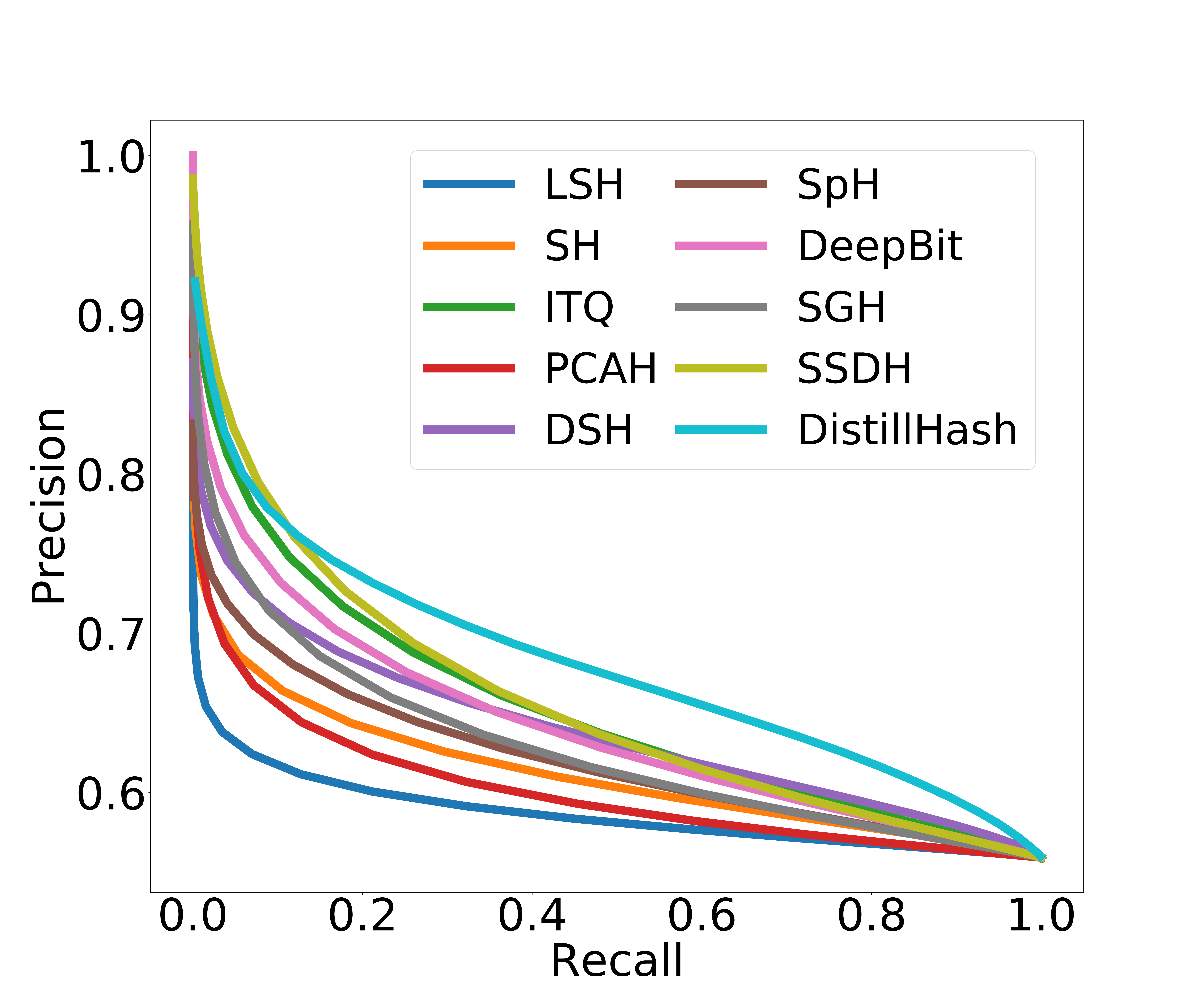}}
\caption{TopN-precision and precision-recall curves on FLICKR25K with 16 and 32 hash bits.}
\label{fig: flickr}
\vspace{-14pt}
\end{figure*}
\begin{figure*}[!t]
\subfigure[TopN-precision with 16bits]{\label{fig: precision_nus_16}\includegraphics[trim={0 0 0 0},width=0.245\textwidth]{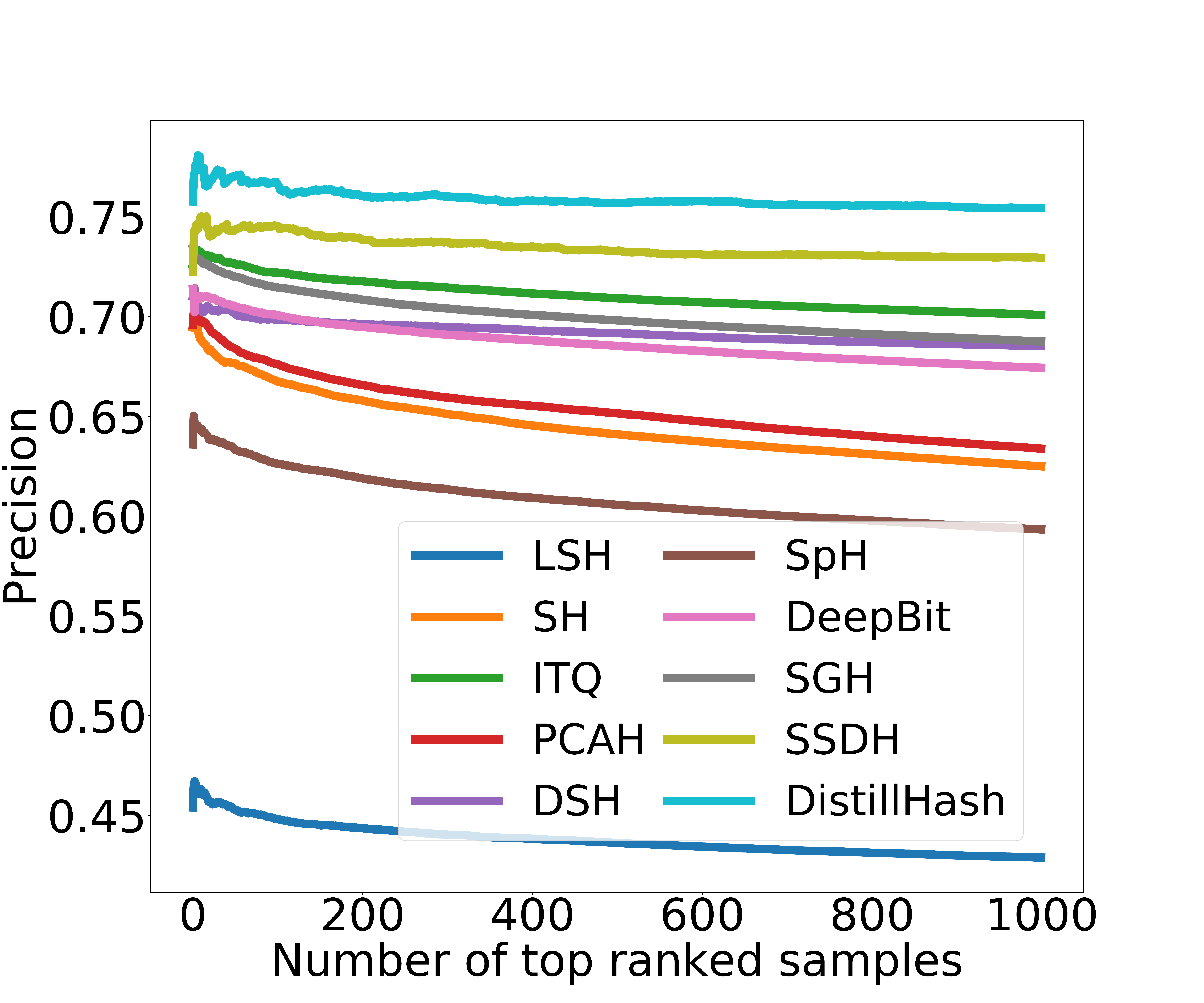}}
\subfigure[TopN-precision with 32bits]{\label{fig: precision_nus_32}\includegraphics[trim={0 0 0 0},width=0.245\textwidth]{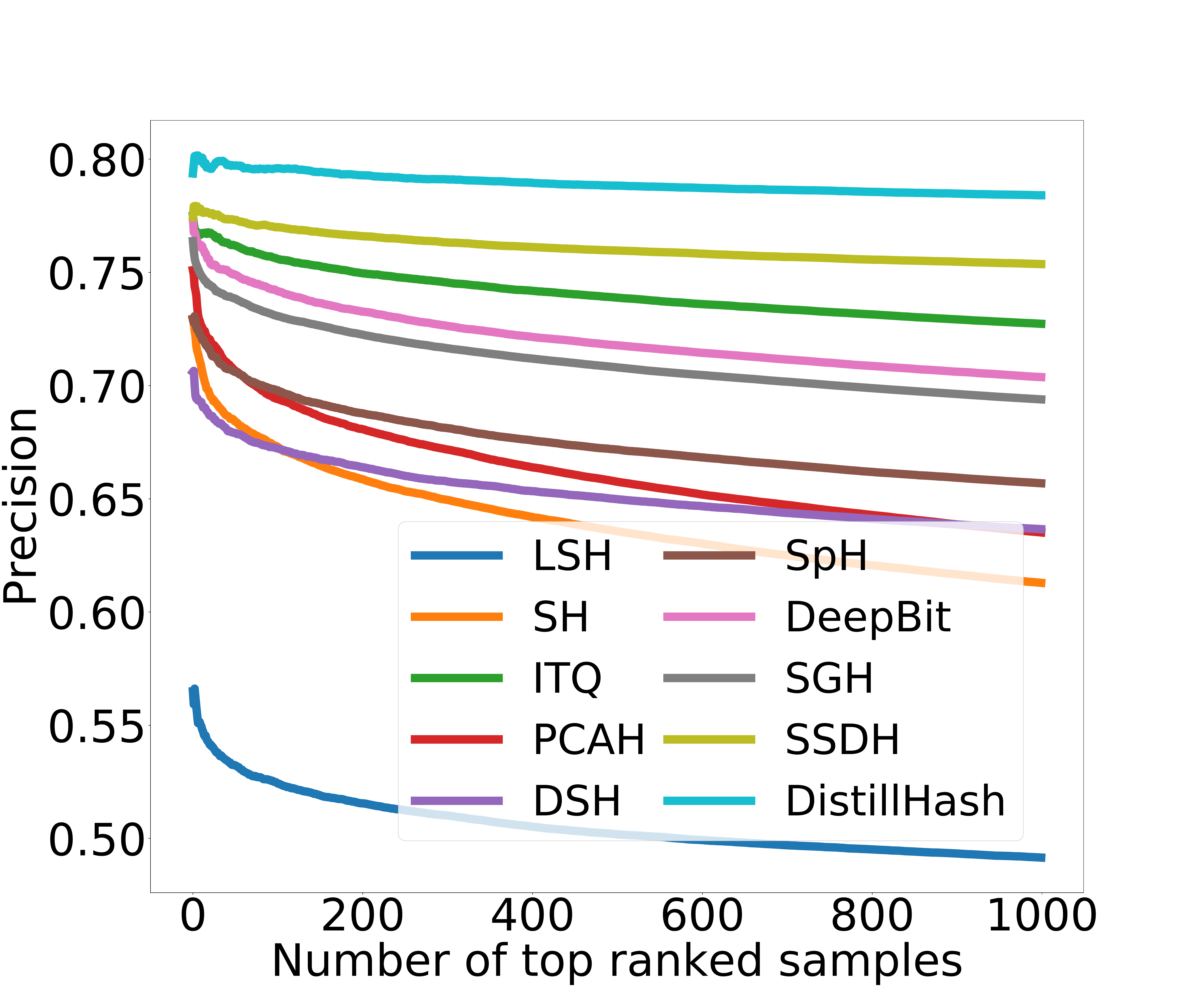}}
\subfigure[Precision-recall with 16bits]{\label{fig: PR_nus_16}\includegraphics[trim={0 0 0 0},width=0.245\textwidth]{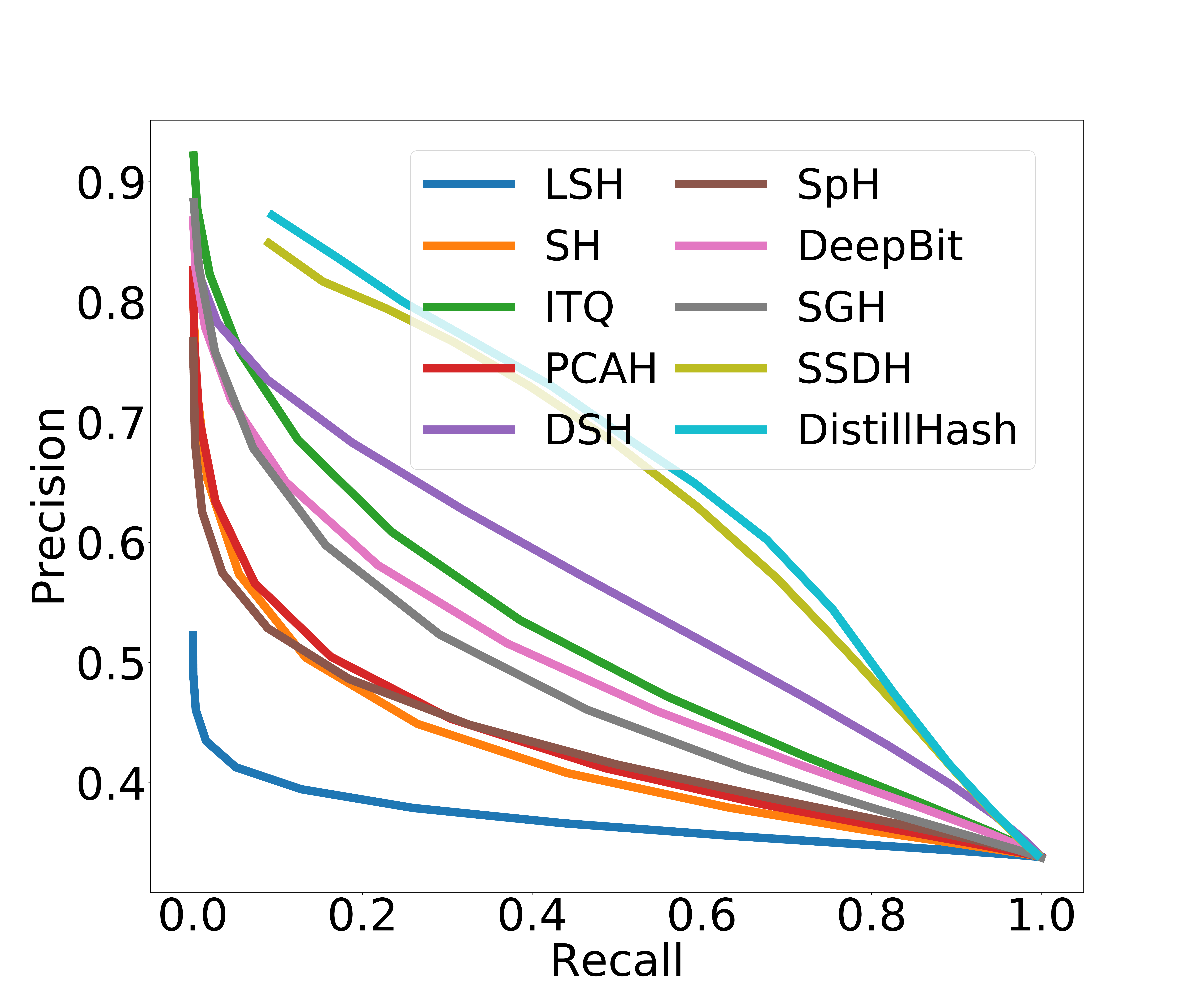}}
\subfigure[Precision-recall with 32bits]{\label{fig: PR_nus_32}\includegraphics[trim={0 0 0 0},width=0.245\textwidth]{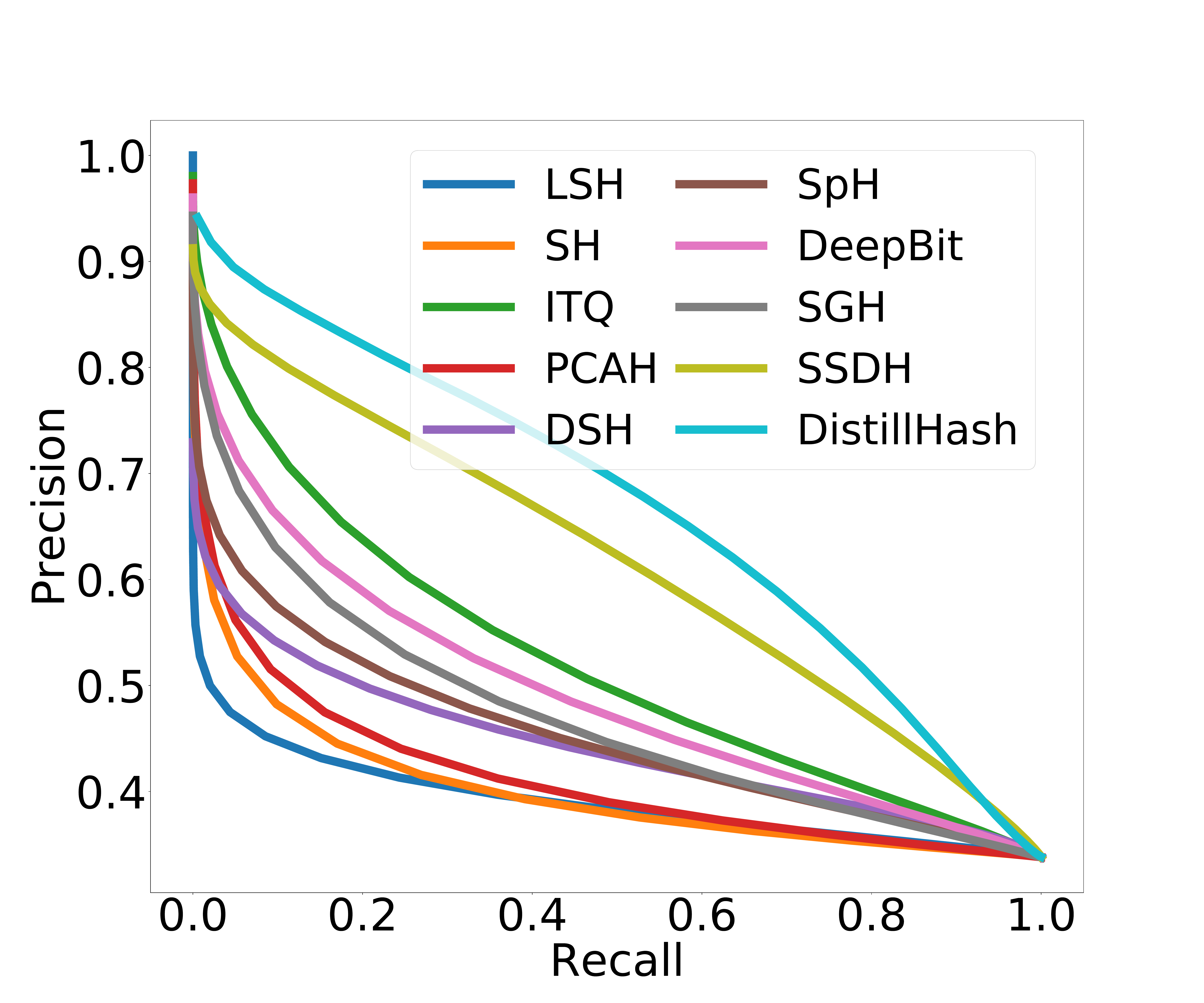}}
\caption{TopN-precision and precision-recall curves on NUSWIDE with 16 and 32 hash bits.}
\label{fig: nuswide}
\vspace{-14pt}
\end{figure*}
\begin{figure*}[!t]
\subfigure[TopN-precision with 16bits]{\label{fig: precision_cifar10_16}\includegraphics[trim={0 0 0 0},width=0.245\textwidth]{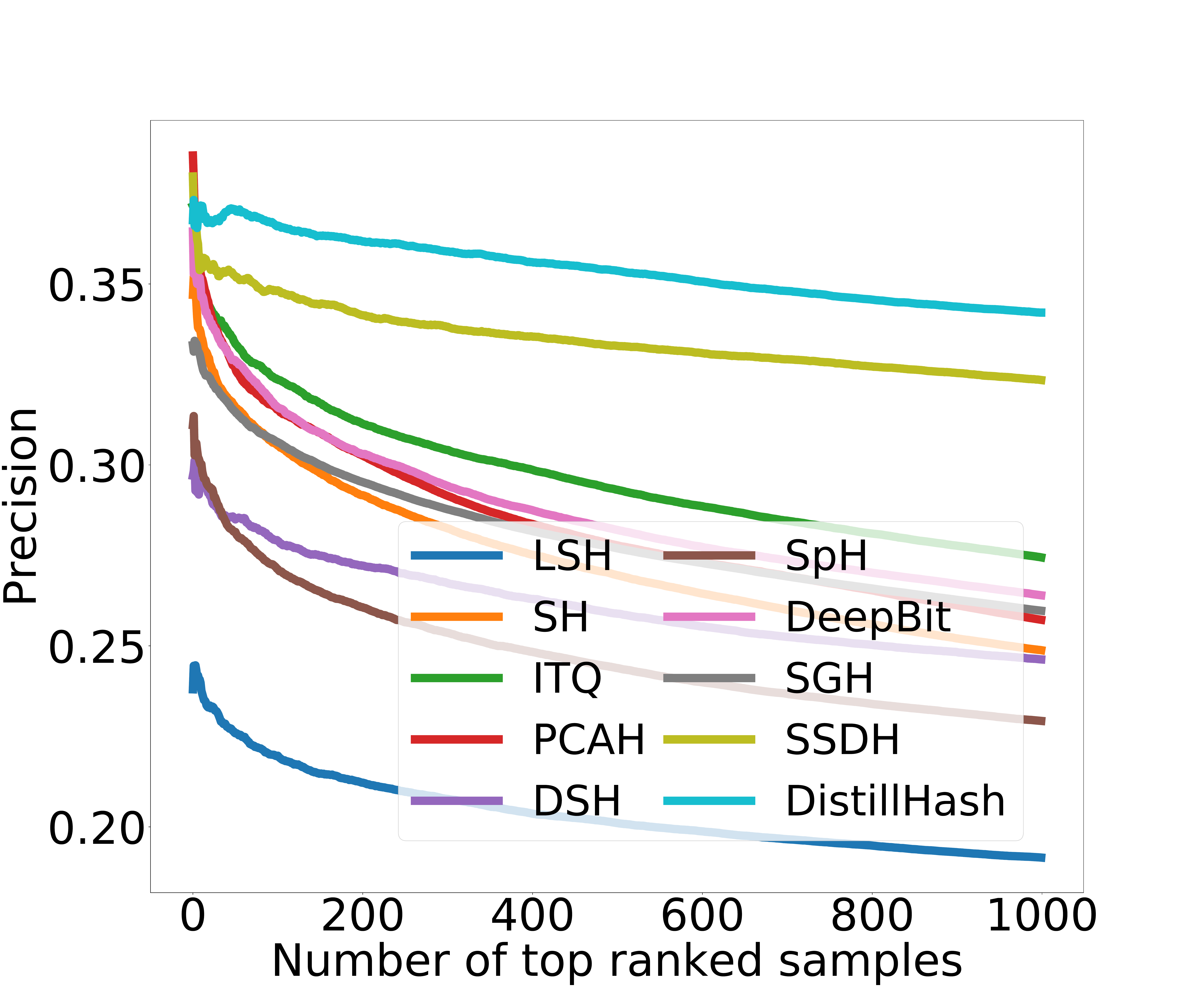}}
\subfigure[TopN-precision with 32bits]{\label{fig: precision_cifar10_32}\includegraphics[trim={0 0 0 0},width=0.245\textwidth]{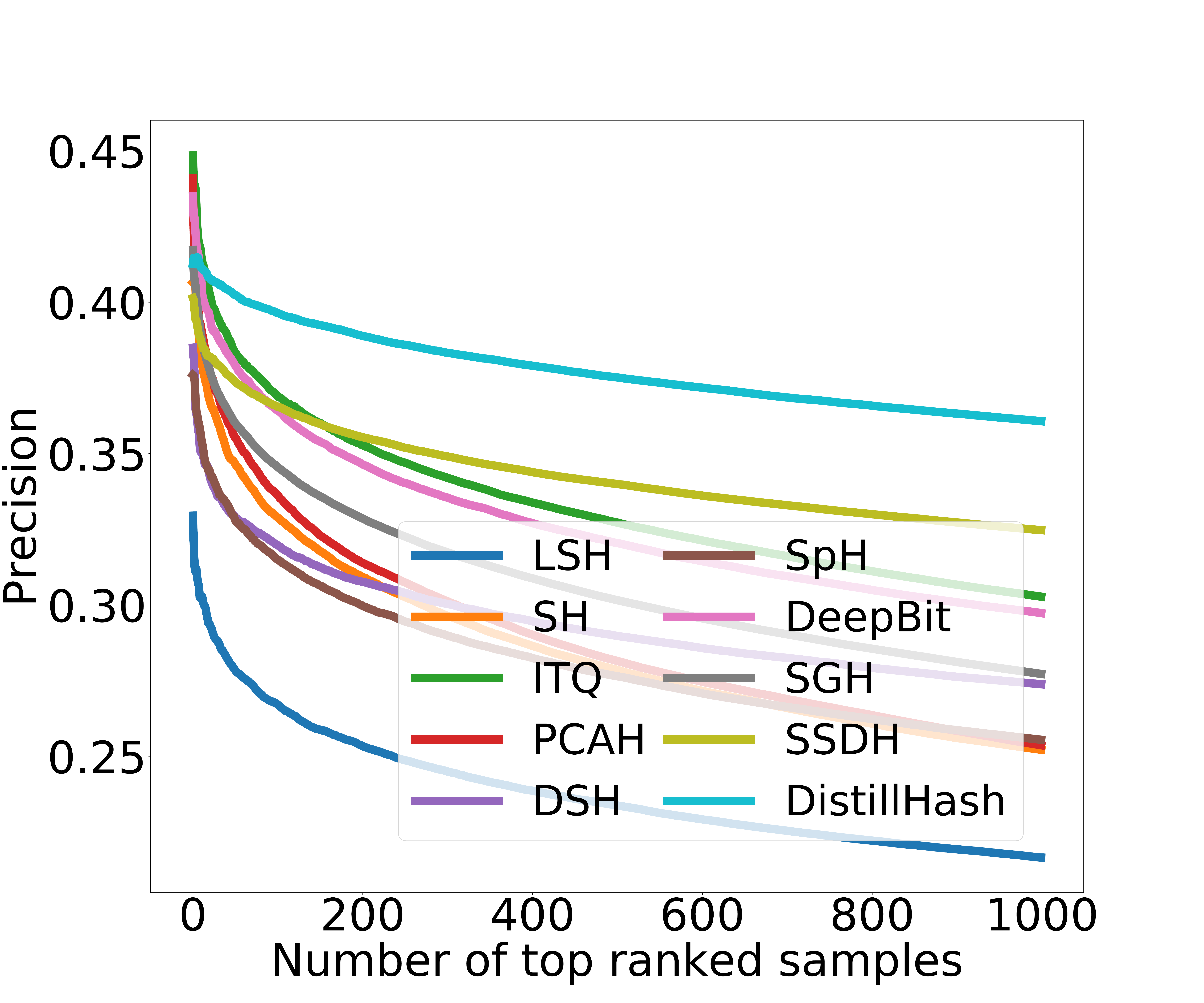}}
\subfigure[Precision-recall with 16bits]{\label{fig: PR_cifar10_16}\includegraphics[trim={0 0 0 0},width=0.245\textwidth]{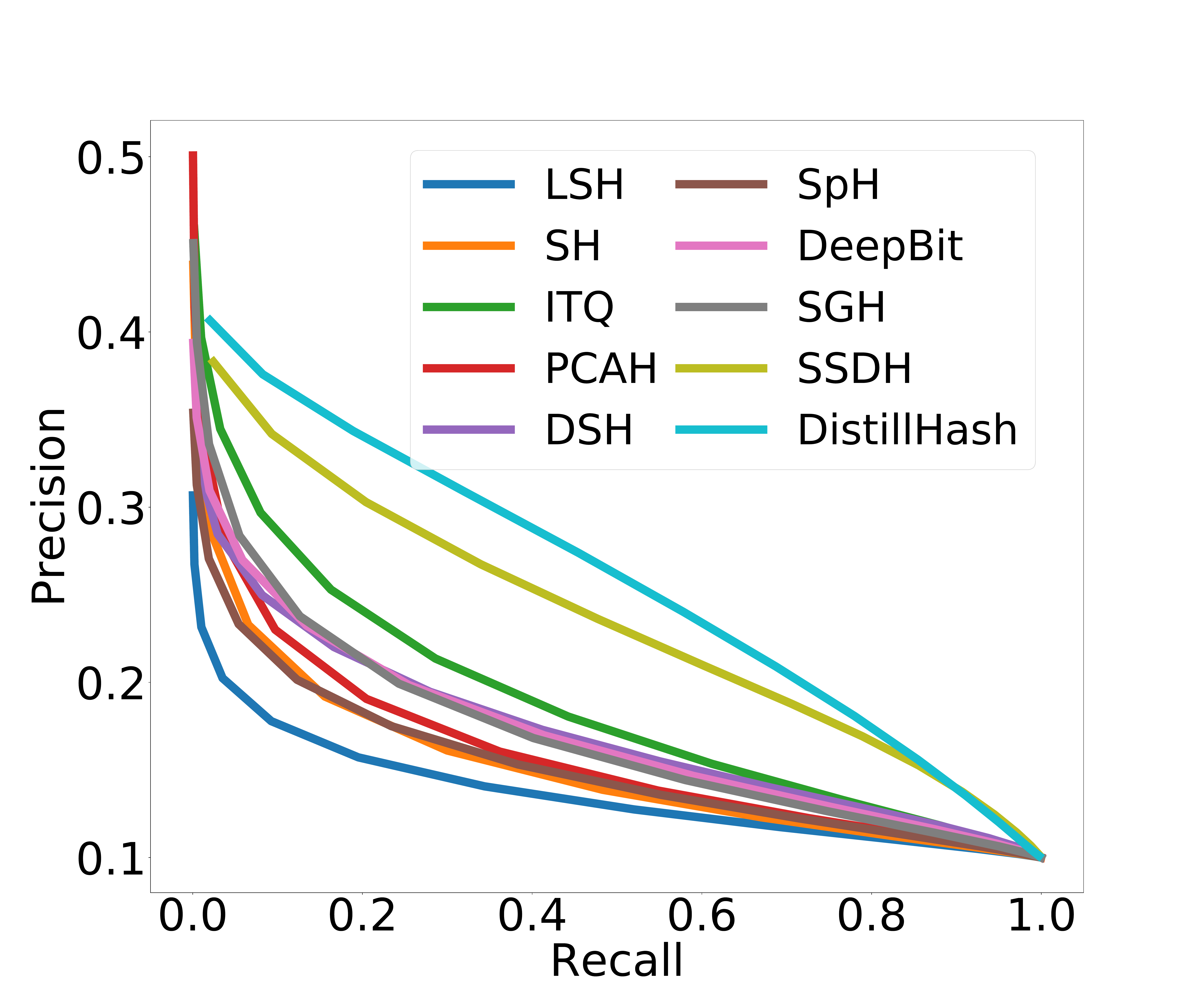}}
\subfigure[Precision-recall with 32bits]{\label{fig: PR_cifar10_32}\includegraphics[trim={0 0 0 0},width=0.245\textwidth]{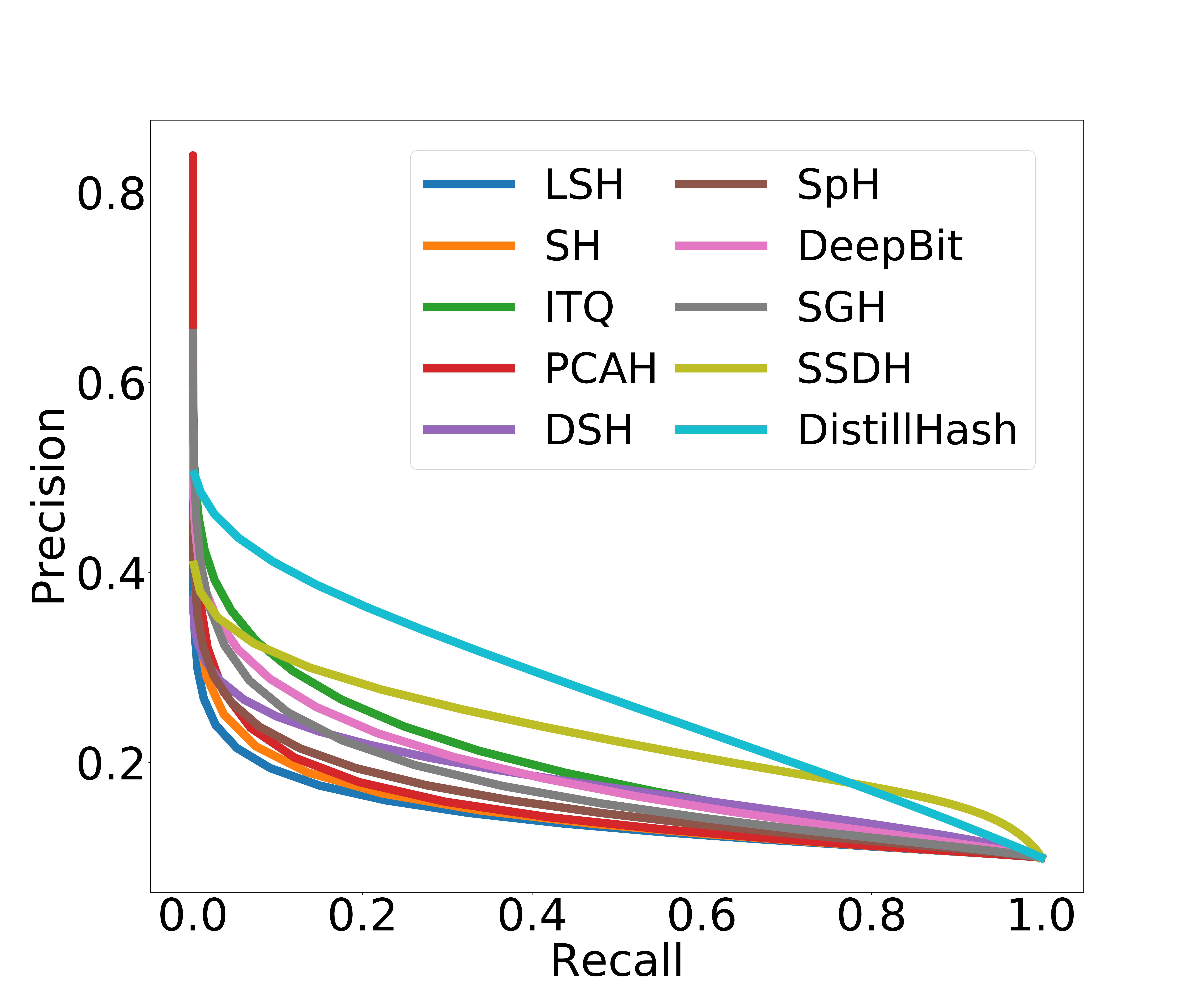}}
\caption{TopN-precision and precision-recall curves on CIFAR10 with 16 and 32 hash bits.}
\label{fig: cifar10}
\vspace{-14pt}
\end{figure*}
%

The left two subfigures of Figure~\ref{fig: flickr},~\ref{fig: nuswide}, and~\ref{fig: cifar10} present the TopN-precision curves for all methods on each of
the three datasets with  hash bit lengths of 16 and 32. Consistent with MAP results, we
can observe that DistillHash achieves the best results among
all approaches.
\begin{table*}[!t]
\small
\newcommand{\tabincell}[2]{\begin{tabular}{@{}#1@{}}#2\end{tabular}}
\centering
\caption{MAP results for DistillHash* and DistillHash. The best accuracy is shown in boldface.}
\begin{tabular}{ccccccccccccc}
      \toprule
 \multirow{2}{*}{method} &\multicolumn{4}{c}{FLICKR25K}&\multicolumn{4}{c}{NUSWIDE}&\multicolumn{4}{c}{CIFAR10}\\
\cmidrule(l){2-5}\cmidrule(l){6-9}\cmidrule(l){10-13}
& 16 bits & 32 bits & 64 bits & 128 bits & 16 bits & 32 bits & 64 bits & 128 bits & 16 bits & 32 bits & 64 bits & 128 bits\\
 \midrule
 DistillHash*     & 0.6653 & 0.6633 & 0.6726 & 0.6784 & 0.6322 & 0.6357 & 0.6480 & 0.6451 & 0.2547 & 0.2538 & 0.2573 & 0.2583 \\
 DistillHash{ }{ }      &\textbf{0.6964} &\textbf{0.7056} & \textbf{0.7075} & \textbf{0.6995}
 &\textbf{0.6667} &\textbf{0.6752} & \textbf{0.6769} & \textbf{0.6747} &\textbf{0.2844} & \textbf{0.2853} & \textbf{0.2867}& \textbf{0.2895}\\
 \hline
\end{tabular}
\label{tab: ab}
\vspace{-16pt}
\end{table*}
Since MAP values and TopN-precision  curves are both
Hamming ranking-based metrics, an overview of the above analysis reveals that
DistillHash can achieve superior performance for Hamming ranking-based evaluations. Moreover, to illustrate the hash lookup results, we plot
the precision-recall curves for all methods with  hash bit lengths of 16 and 32
in the right two subfigures of Figure~\ref{fig: flickr},~\ref{fig: nuswide}, and~\ref{fig: cifar10}. From the results, we can again observe that DistillHash consistently achieves the best
performance, which further demonstrates the superiority of our proposed method.

\begin{figure}[!t]
\subfigure[FLICKR25K]{\label{fig: loss_flickr}\includegraphics[trim={0 0 0 0},width=0.15\textwidth]{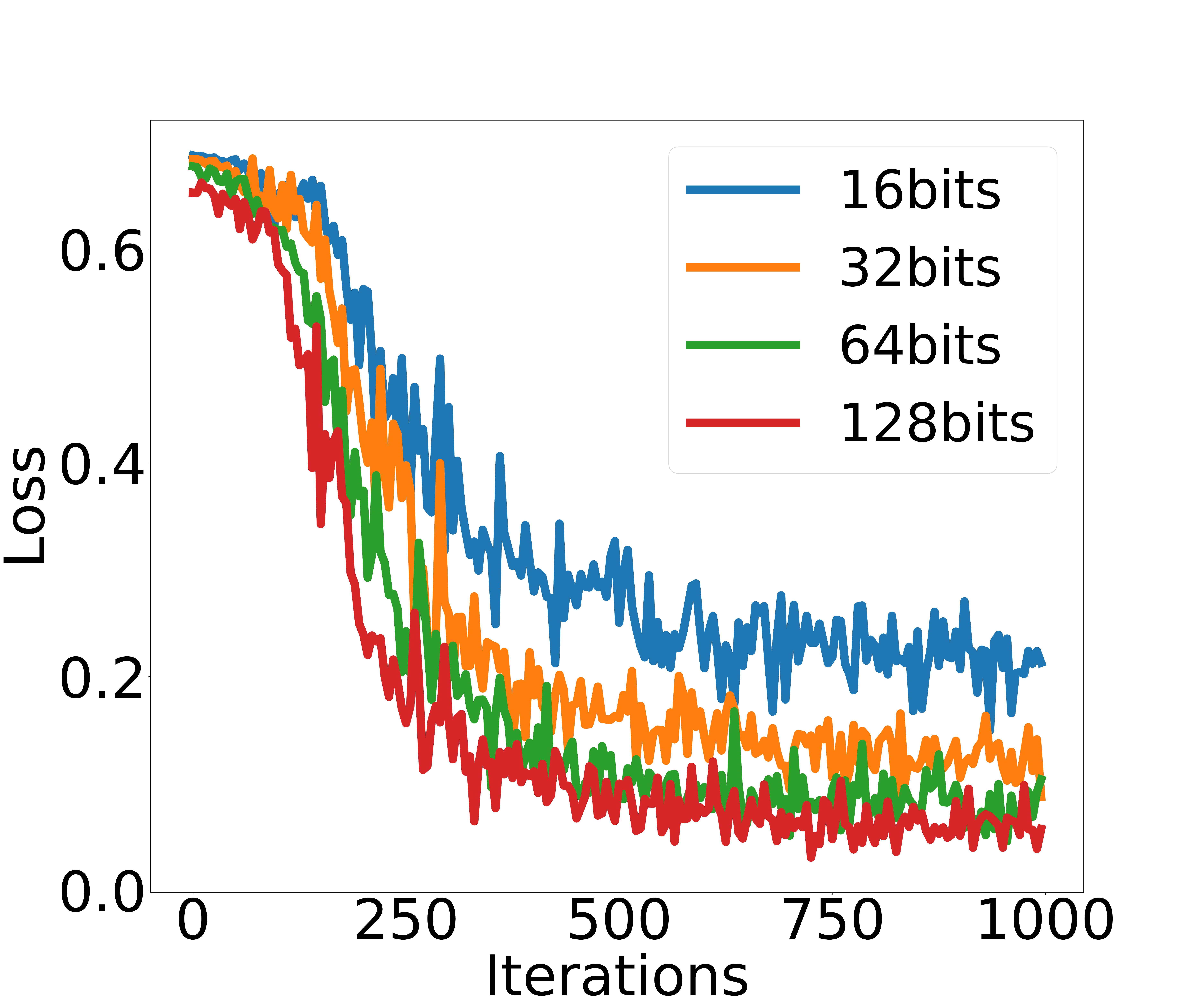}}
\subfigure[NUSWIDE]{\label{fig: loss_nuswide}\includegraphics[trim={0 0 0 0},width=0.15\textwidth]{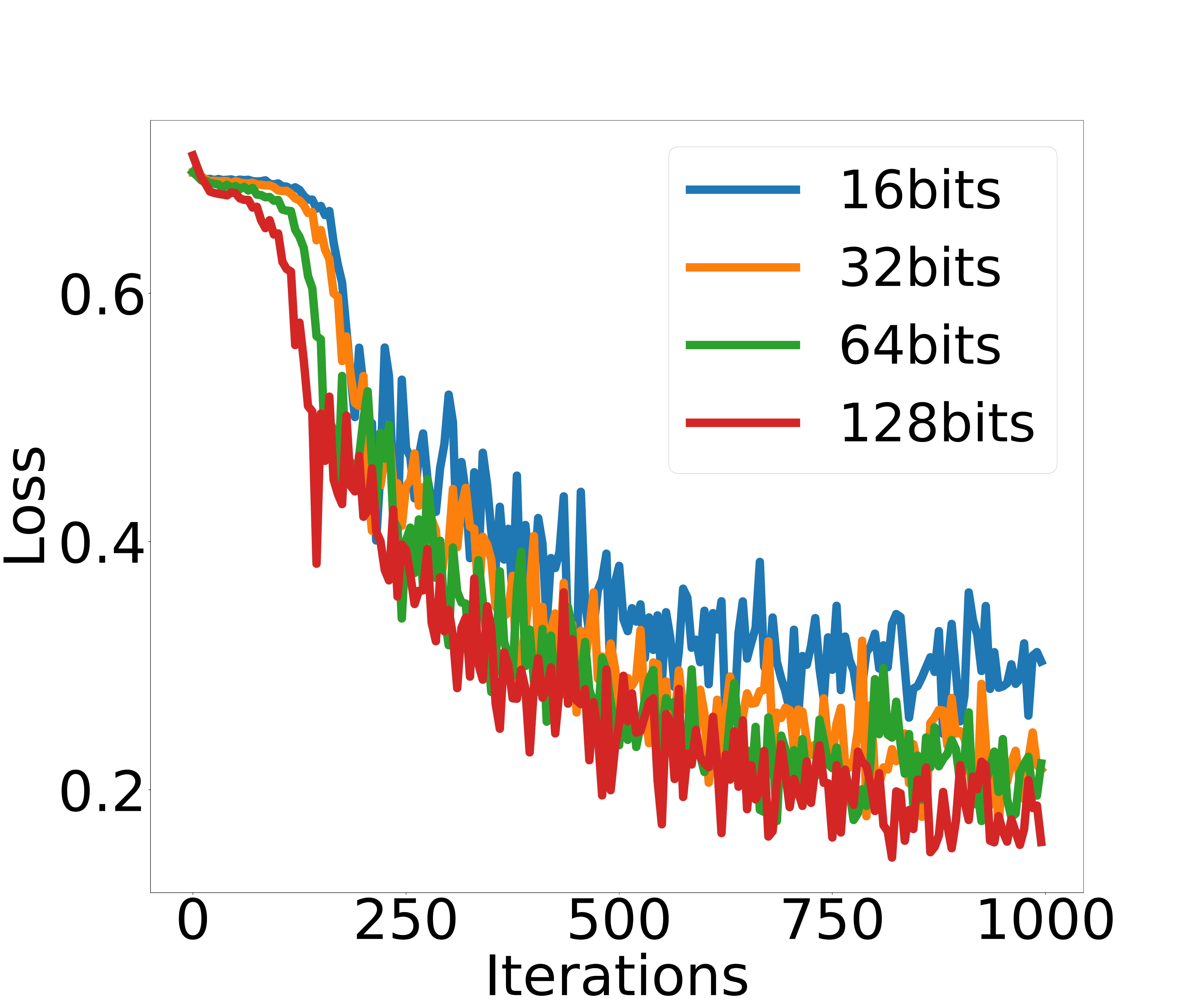}}
\subfigure[CIFAR10]{\label{fig: loss_cifar10}\includegraphics[trim={0 0 0 0},width=0.15\textwidth]{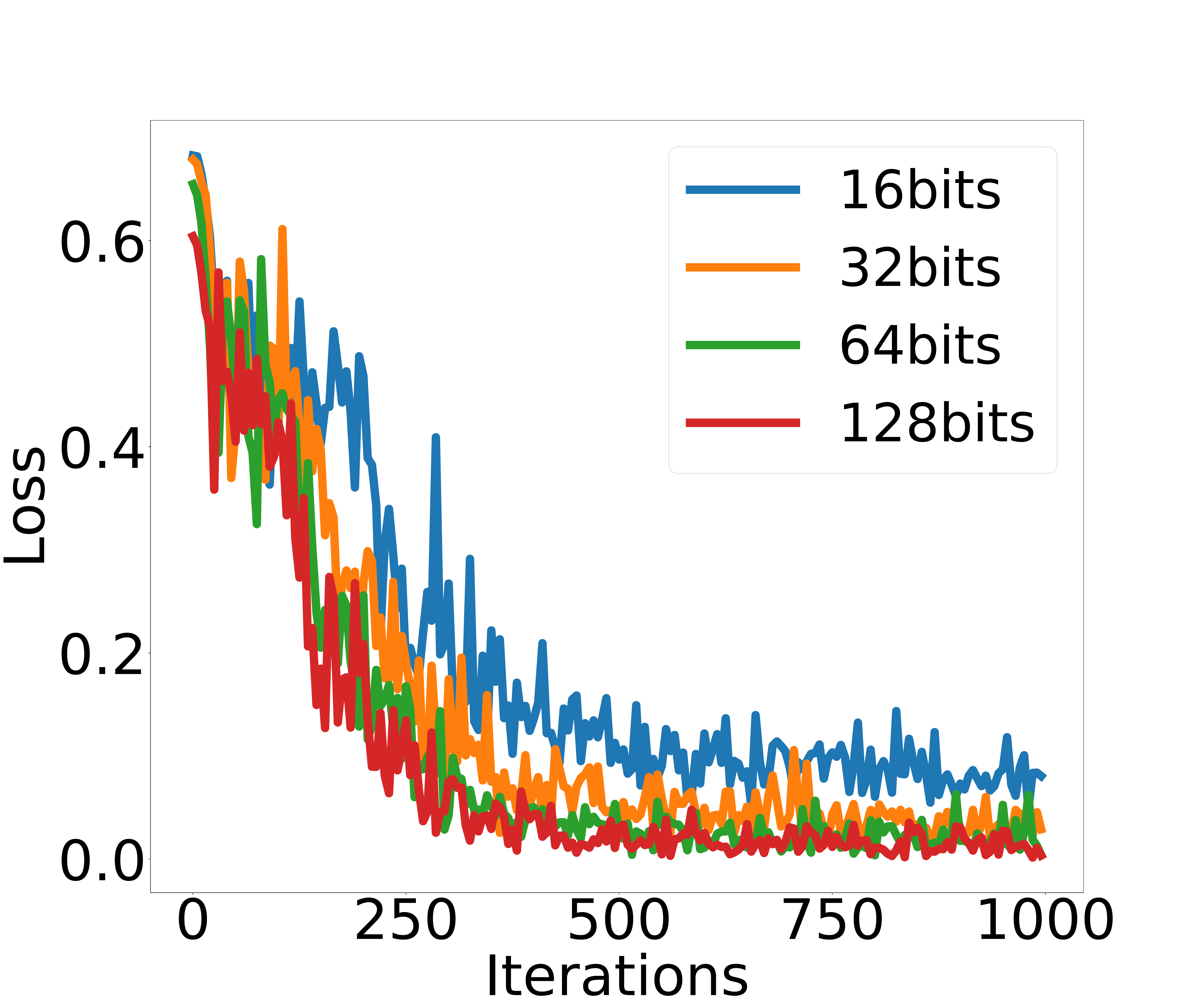}}
\caption{Losses of DistillHash through the training process.}
\label{fig: loss}
\vspace{-12pt}
\end{figure}
To investigate the change of loss values through the training process, we display the loss values in Figure~\ref{fig: loss}. The results reveal that our methods can
converge in all cases within 1,000 iterations.
\subsection{Parameter Sensitivity}
\label{parameter}
We next investigate the sensitivity of hyper-parameters $o$ and $p$. Figure~\ref{fig: 4} shows the effect of these two hyper-parameters on NUSWIDE dataset with hash code lengths of
16, 32, 64, and 128. We first fix $p$ to $48$ and evaluate the MAP  by varying $o$ between $2$ and $20$, the results are presented in Figure~\ref{fig: o}.
 The performance shows that the algorithm is not sensitive to parameter $o$ in the range of $[2, 20]$, and we can set $o$ as any number in the range of $[2, 20]$. In our experiments, we set $o$ as $4$.
Figure~\ref{fig: p} shows the MAP  by varying $p$ between $16$ and $128$ with $o$ fixed to $4$. The performance
of DistillHash first increases and then keeps at a relatively high level. The result is also not sensitive to $p$ in the range of $[32, 128]$
. For other experiments in this paper, we select $p$ as $48$.


\subsection{Ablation Study}
\label{ablation}
In this subsection, we go deeper to study the efficacy of the proposed
distilled data pair learning. More specifically, we investigate DistillHash*,
 a variant of DistillHash with the same Bayesian learning framework but trained with the initial similarity
 label $\tilde{\bm{S}}$.
The MAP results of DistillHash* and DistillHash are shown in Table~\ref{tab: ab}, from which we can see that
DistillHash consistently outperforms DistillHash* by
 margins of $3.11\%$, $4.23\%$, $3.49\%$ and $2.11\%$ for the FLICKR25K dataset,
 $3.45\%$, $3.95\%$, $2.89\%$ and $2.96\%$ for the NUSWIDE dataset, and
 $2.97\%$, $3.15\%$, $2.94\%$ and $3.12\%$ for the CIFAR10 dataset at hash bit lengths of 16, 32, 64, and 128 respectively. Note that the only difference between DistillHash and DistillHash* lies in  that
 DistillHash is trained with distilled data set and DistillHash* is trained with the initial data set. The performance improvements clearly demonstrate the superiority of the proposed distilled data pair learning.
\begin{figure}[!t]
\subfigure[MAP w.r.t. different $o$.]{\label{fig: o}\includegraphics[trim={0 0 0 0},width=0.235\textwidth]{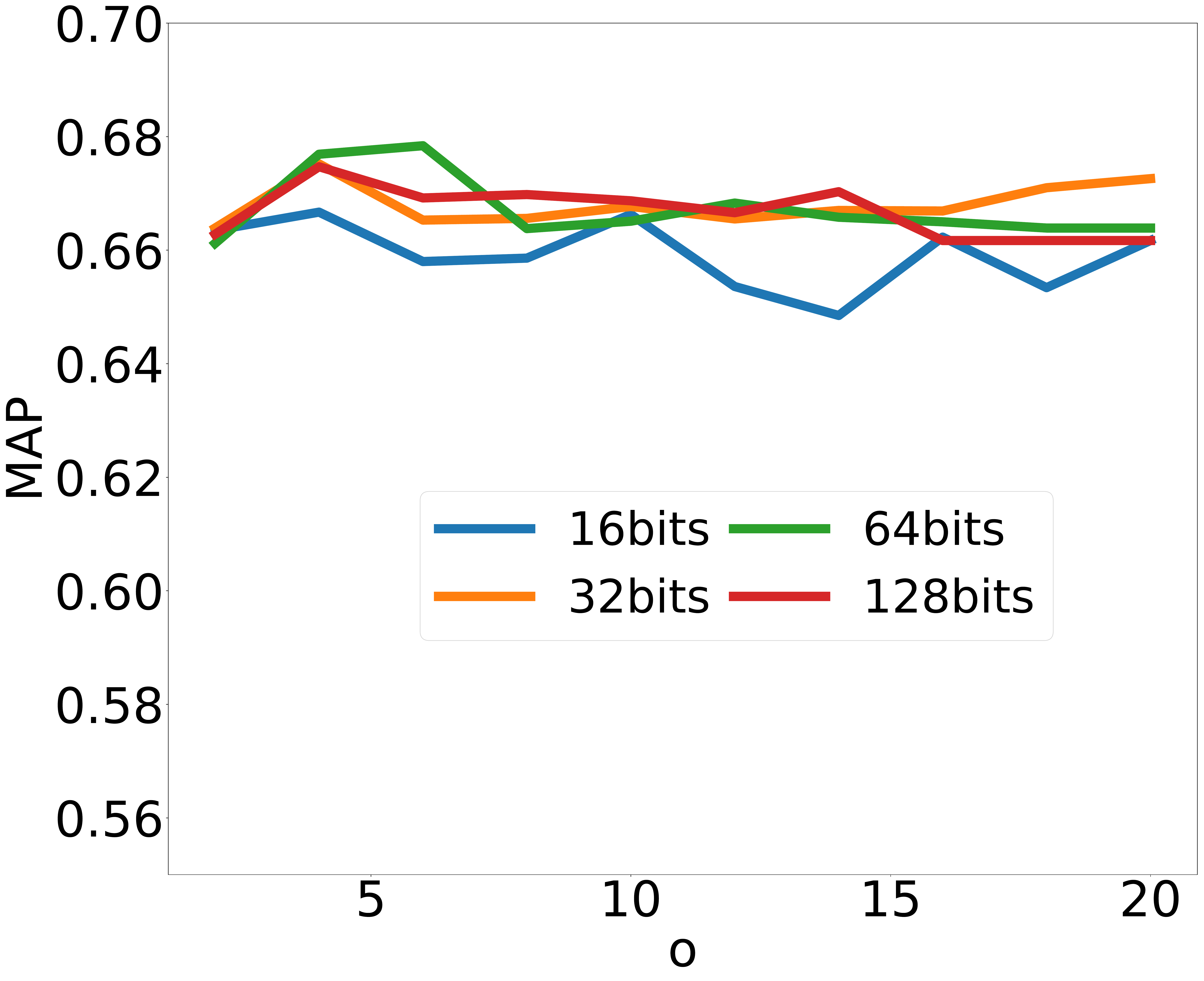}}
\subfigure[MAP w.r.t. different $p$.]{\label{fig: p}\includegraphics[trim={0 0 0 0},width=0.235\textwidth]{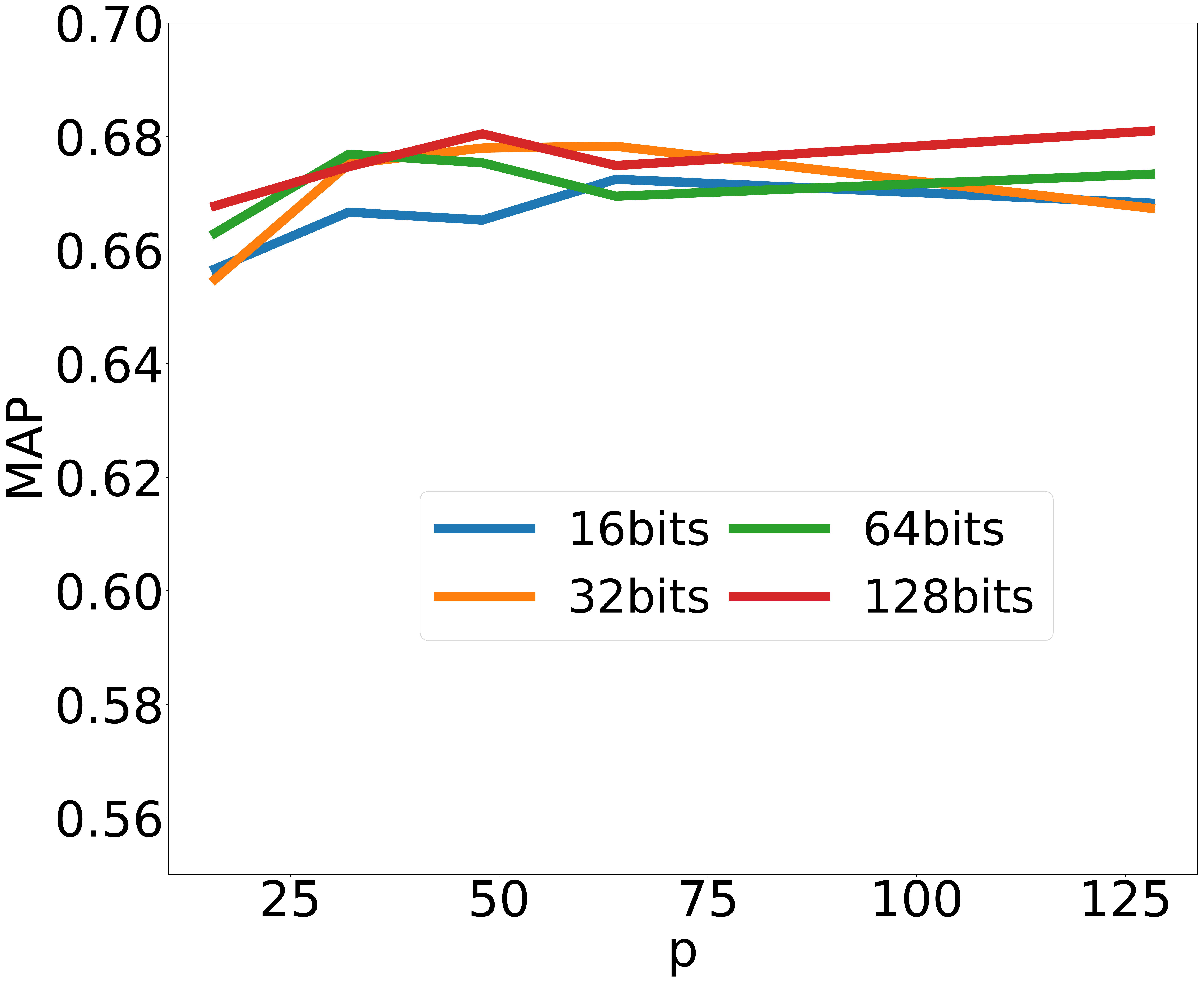}}

\caption{Parameter sensitive analysis for $o$ and $p$ on NUSWIDE.}
\label{fig: 4}
\vspace{-12pt}
\end{figure}
\section{Conclusions}
\label{conclusion}
This work presented a new unsupervised deep hashing approach for image search, namely DstilHash. Firstly, we theoretically investigated the relationship between the Bayes optimal classifier and noisy labels learned from local structures, showing that distilled data pairs can be potentially
collected. Secondly, with the above understanding, we provided a simple yet effective scheme to automatically distill data pairs. Thirdly, leveraging a distilled data set, we designed a deep hashing model and adopted a Bayesian learning framework to perform the hash code learning. The experimental results on three benchmark datasets demonstrated that the proposed DistillHash surpasses other
state-of-the-art methods.
\section{Acknowledgements}
{ This work was also supported in part by the National Natural Science Foundation of China under Grant 61572388 and 61703327, in part by the Key R\&D Program-The Key Industry Innovation Chain of Shaanxi under Grant 2017ZDCXL-GY-05-04-02, 2017ZDCXL-GY-05-02 and 2018ZDXM-GY-176, in part by the National Key R\&D Program of China under Grant 2017YFE0104100, and in part by the Australian Research Council Projects DP-180103424, DE-1901014738, and FL-170100117.}
{\small
\bibliographystyle{ieee}
\bibliography{egpaper_for_review}
}

\end{document}